\newcommand{\prompt}[1]{\tiny #1}
\newcommand{\orange}[1]{\textcolor{orange}{#1}}
\theoremstyle{plain}
\newtheorem{theorem}{Theorem}[section]
\newtheorem{proposition}[theorem]{Proposition}
\theoremstyle{definition}
\newtheorem{definition}[theorem]{Definition}
\theoremstyle{remark}
\begin{document}

\twocolumn[
\icmltitle{On Discrete Prompt Optimization for Diffusion Models}


\icmlsetsymbol{equal}{$\dagger$}

\begin{icmlauthorlist}
\icmlauthor{Ruochen Wang}{google,ucla}
\icmlauthor{Ting Liu}{deepmind}
\icmlauthor{Cho-Jui Hsieh}{google,ucla}
\icmlauthor{Boqing Gong}{google}\\
{Google Research} \quad {Google Deepmind} \quad {UCLA} \\
\url{https://github.com/ruocwang/dpo-diffusion}
\end{icmlauthorlist}

\icmlaffiliation{ucla}{University of California, Los Angeles}
\icmlaffiliation{google}{Google Research}
\icmlaffiliation{deepmind}{Google Deepmind}

\icmlcorrespondingauthor{Boqing Gong}{bgong@google.com}
\icmlcorrespondingauthor{Ruochen Wang}{ruocwang@g.ucla.edu}

\icmlkeywords{Machine Learning, ICML}

\vskip 0.3in
]



\printAffiliationsAndNotice{}  

\begin{abstract}
This paper introduces the first gradient-based framework for prompt optimization in text-to-image diffusion models. We formulate prompt engineering as a discrete optimization problem over the language space. Two major challenges arise in efficiently finding a solution to this problem: \textit{(1) Enormous Domain Space:} Setting the domain to the entire language space poses significant difficulty to the optimization process. \textit{(2) Text Gradient:} Efficiently computing the text gradient is challenging, as it requires backpropagating through the inference steps of the diffusion model and a non-differentiable embedding lookup table. Beyond the problem formulation, our main technical contributions lie in solving the above challenges. First, we design a family of dynamically generated compact subspaces comprised of only the most relevant words to user input, substantially restricting the domain space. Second, we introduce ``Shortcut Text Gradient" --- an effective replacement for the text gradient that can be obtained with constant memory and runtime. Empirical evaluation on prompts collected from diverse sources (DiffusionDB, ChatGPT, COCO) suggests that our method can discover prompts that substantially improve (prompt enhancement) or destroy (adversarial attack) the faithfulness of images generated by the text-to-image diffusion model.
\end{abstract}
\section{Introduction}
\label{sec:intro}

\begin{figure*}[t]
    \centering
    \includegraphics[width=0.95\textwidth,
    clip=false]{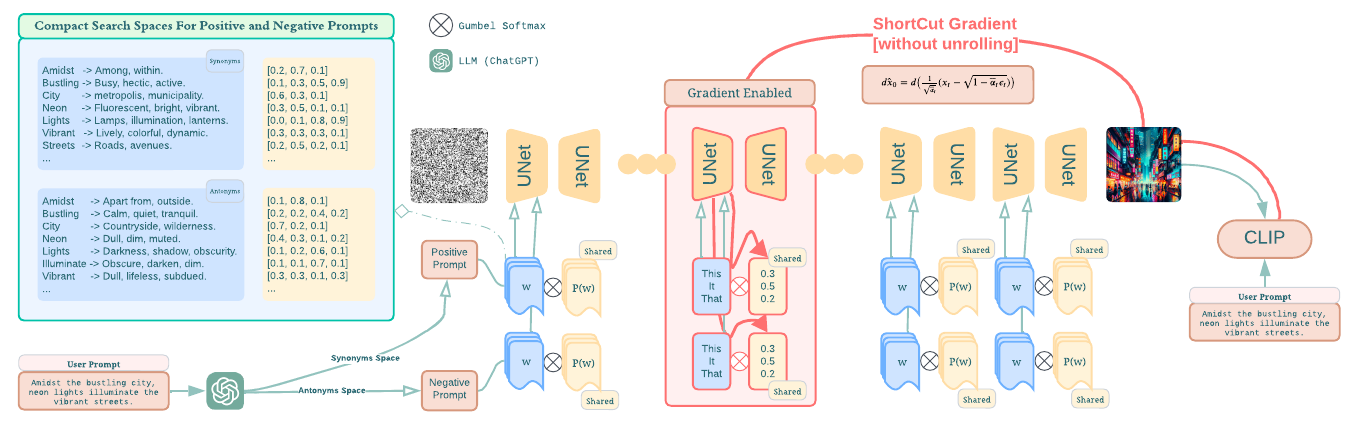}
    \caption{Computational procedure of Shortcut Text Gradient (Bottom) v.s. Full Gradient (Top) on text.}
    \label{fig:DPO-Diff-pipeline}
    \vspace{-2mm}
\end{figure*}


Large-scale text-based generative models exhibit a remarkable ability to generate novel content conditioned on user input prompts~\citep{chatgpt, llama, sd, dalle2, imagen, imagen-video, parti, muse}.
Despite being trained with huge corpora, there still exists a substantial gap between user intention and what the model interprets~\citep{ape, structdiff, sd, clip, grounded, chatgpt, dalle2}.
The misalignment is even more severe in text-to-image generative models, partially since they often rely on much smaller and less capable text encoders~\citep{clip, openclip, t5} than large language models (LLMs).
As a result, instructing a large model to produce intended content often requires laborious human efforts in crafting the prompt through trials and errors (a.k.a.\ Prompt Engineering)~\citep{lexica, diffusiondb, pe-diff, pe-diff2, ape, promptist}.
To automate this process for language generation, several recent attempts have shown tremendous potential in utilizing LLMs to enhance prompts~\citep{apo, ape, instructzero, prompt-evo, llm-opt, promptist}.
However, efforts on text-to-image generative models remain scarce and preliminary, probably due to the challenges faced by these models' relatively small text encoders in understanding subtle language cues.

\paragraph{DPO-Diff.}
This paper presents a systematic study of prompt optimization for text-to-image diffusion models.
We introduce a novel optimization framework based on the following key observations.
\textit{1) Prompt engineering for diffusion models can be formulated as a Discrete Prompt Optimization (DPO-Diff) problem over the space of natural languages.}
Moreover, the framework can be used to find prompts that either improve (prompt enhancement) or destroy (adversarial attack) the generation process, by simply reversing the sign of the objective function.
\textit{2) We show that for diffusion models with classifier-free guidance~\citep{cfg}, improving the image generation process is more effective when optimizing ``negative prompts''~\citep{np1, np2} than positive prompts.}
Beyond the problem formulation of DPO-Diff, where ``Diff'' highlights our focus on text-to-image diffusion models, the main technical contributions of this paper lie in efficient methods for solving this optimization problem, including the design of compact domain spaces and a gradient-based algorithm.

\paragraph{Compact domain spaces.}
DPO-Diff's domain space is a discrete search space at the word level to represent prompts.
While this space is generic enough to cover any sentence, it is excessively large due to the dominance of words irrelevant to the user input.
To alleviate this issue, we design a family of dynamically generated compact search spaces based on relevant word substitutions, for both positive and negative prompts.
These subspaces enable efficient search for both prompt enhancement and adversarial attack tasks.

\paragraph{Shortcut Text Gradients for DPO-Diff.}
Solving DPO-Diff with a gradient-based algorithm requires computing the text gradient, i.e., backpropagating from the generated image, through all inference steps of a diffusion model, and finally to the discrete text.
Two challenges arise in obtaining this gradient:
1) This process incurs compound memory-runtime complexity over the number of backward passes through the denoising step, making it prohibitive to run on large-scale diffusion models
(e.g., a 870M-parameter Stable Diffusion v1 requires $\sim$750G memory to run backpropagation through 50 inference steps~\citep{sd}).
2) The embedding lookup tables in text encoders are non-differentiable.
To reduce the computational cost in 1), we provide a generic replacement for the text gradient that bypasses the need to unroll the inference steps in a backward pass, allowing it to be computed with constant memory and runtime.
To backpropagate through the discrete embedding lookup table, we continuously relax the categorical word choices to a learnable smooth distribution over the vocabulary, using the Gumbel Softmax trick~\citep{gbda, gumbel, gdas}.
The gradient obtained by this method, termed \textbf{Shortcut Text Gradient}, enables us to efficiently solve DPO-Diff regardless of the number of inference steps of a diffusion model.

To evaluate our prompt optimization method for the diffusion model, we collect and filter a set of challenging prompts from diverse sources including DiffusionDB~\citep{diffusiondb}, COCO~\citep{coco}, and ChatGPT~\citep{chatgpt}.
Empirical results suggest that DPO-Diff can effectively discover prompts that improve (or destroy for adversarial attack) the faithfulness of text-to-image diffusion models, surpassing human-engineered prompts and prior baselines by a large margin.
We summarize our primary contributions as follows:
\begin{itemize}[itemsep=2pt, topsep=2pt, left=0pt, label=$\bullet$]
    \item \textbf{DPO-Diff:} A generic framework for prompt optimization as a discrete optimization problem over the space of natural languages, of arbitrary metrics.
    \item \textbf{Compact domain spaces:} A family of dynamic compact search spaces, over which a gradient-based algorithm enables efficient solution finding for the prompt optimization problem.
    \item \textbf{Shortcut Text Gradients:} The first novel computation method to enable backpropagation through the diffusion models' lengthy sampling steps with constant memory-runtime complexity, enabling gradient-based search algorithms.
    \item \textbf{Negative prompt optimization:} The first empirical result demonstrating the effectiveness of optimizing negative prompts for diffusion models.
\end{itemize}

\section{Related Work}

\paragraph{Text-to-image diffusion models.}
Diffusion models trained on a large corpus of image-text datasets significantly advanced the state of text-guided image generation~\citep{sd, dalle2, imagen, muse, parti}.
Despite the success, these models can sometimes generate images with poor quality.
While some preliminary observations suggest that negative prompts can be used to improve image quality~\citep{np1, np2}, there exists no principled way to find negative prompts.
Moreover, several studies have shown that large-scale text-to-image diffusion models face significant challenges in understanding language cues in user input during image generation;
Particularly, diffusion models often generate images with missing objects and incorrectly bounded attribute-object pairs, resulting in poor ``faithfulness'' or ``relevance''~\citep{promptist, structdiff, grounded, compdiff}.
Existing solutions to this problem include compositional generation~\citep{compdiff}, augmenting diffusion model with large language models~\citep{llm-opt}, and manipulating attention masks~\citep{structdiff}.
As a method orthogonal to them, our work reveals that negative prompt optimization can also alleviate this issue.

\paragraph{Prompt optimization for text-based generative models.}
Aligning a pretrained large language model (LLM) with human intentions is a crucial step toward unlocking the potential of large-scale text-based generative models~\citep{chatgpt, sd}.
An effective line of training-free alignment methods is prompt optimization (PO)~\citep{ape}.
PO originated from in-context learning~\citep{gpt3}, which is mainly concerned with various arrangements of task demonstrations.
It later evolves into automatic prompt engineering, where powerful language models are utilized to refine prompts for certain tasks~\citep{ape, apo, llm-opt, apo, promptist}.
While PO has been widely explored for LLMs, efforts on diffusion models remain scarce.
The most relevant prior work to ours is Promptist~\citep{promptist}, which finetunes an LLM via reinforcement learning from human feedback~\citep{chatgpt} to augment user prompts with artistic modifiers (e.g., high-resolution, 4K)~\citep{lexica}, resulting in aesthetically pleasing images.
However, the lack of paired contextual-aware data significantly limits its ability to follow the user intention (Figure \ref{fig:improve}).

\paragraph{Textual Inversion}
Optimizing texts in pretrained diffusion models has also been explored under ``Textual Inversion'' task~\citep{ti, pez, nti}.
Textual Inversion involves adapting a frozen model to generate novel visual concepts based on a set of user-provided images.
It achieves this by distilling these images into soft or hard text prompts, enabling the model to replicate the visual features of the user images.
Since the source images are provided, the training process mirrors that of typical diffusion model training.
While some Textual Inversion papers also use the term ``prompt optimization", it is distinct from the Prompt Optimization considered by Promptist~\citep{promptist} and our work.
Our objective is to enhance a model’s ability to follow text prompts.
Here, the primary input is the user prompt, and improvement is achieved by optimizing this prompt to enhance the resulting image.
Since the score function is applied to the final generated image, the optimization process necessitates backpropagation through all inference steps.
Despite using similar terminologies, these methodologies are fundamentally distinct and not interchangeable.
Table~\ref{tab:taxonomy} further summarizes the key differences in taxonomy.

\paragraph{Efficient Backpropagation through diffusion sampling steps.}
Text-to-image diffusion models generate images via a progressive denoising process, making multiple passes through the same network~\citep{ddpm}.
When a loss is applied to the output image, computing the gradient w.r.t.\ any model component (text, weight, sampler, etc.) requires backpropagating through all the sampling steps.
This process incurs compound complexity over the number of backward passes in both memory and runtime, making it infeasible to run on regular commercial devices.
Existing efforts achieve constant memory via gradient checkpointing~\citep{bp1} or solving an augmented SDE problem~\citep{bp2}, at the expense of even higher runtime.

\section{Preliminaries on diffusion model}
\label{sec:prelim}

\paragraph{Denoising diffusion probabilistic models.}
On a high level, diffusion models~\citep{ddpm} is a type of hierarchical Variational Autoencoder~\citep{hvae} that generates samples by reversing (backward) a progressive noisification process (forward).
Let $\bm{x}_0 \cdots \bm{x}_T$ be a series of intermediate samples of increasing noise levels, the forward process progressively adds Gaussian noise to the original image $\bm{x}_0$:
\begin{align}
    q(\bm{x}_t|\bm{x}_{t-1}) = \mathcal{N}(\bm{x}_{t}; \sqrt{1 - \beta_t} \bm{x}_{t-1}, \beta_t\bm{I}),
\end{align}
where $\beta$ is a scheduling variable.
Using reparameterization trick, $\bm{x}_t|^T_{t=1}$ can be computed from $\bm{x}_0$ in one step:
\begin{align}
    &\bm{x}_t = \sqrt{\bar{\alpha}_t} \bm{x}_0 + \sqrt{1 - \bar{\alpha}_t} \epsilon,\\
    &\text{where} \ \ \alpha_t = 1 - \beta_t \ \text{and} \ \bar{\alpha}_t = \prod\nolimits_{i=1}^t \alpha_i, 
    \label{eq:forward}
\end{align}
where $\epsilon$ is a standard Gaussian error.
The reverse process starts with a standard Gaussian noise, $\bm{x}_T \sim \mathcal{N}(\bm{0}, \bm{I})$, and progressively denoises it using the following joint distribution:
\begin{align*}
    p_\theta (\bm{x}_{0:T}) &= p(\bm{x}_T) \prod\nolimits_{t=1}^T p_\theta(\bm{x}_{t-1}|\bm{x}_t) \\
    &\text{where} \ \ p_\theta(\bm{x}_{t-1}|\bm{x}_t) = \mathcal{N}(\bm{x}_{t-1}; \mu_\theta(\bm{x}_t, t), \bm{\Sigma}).
\end{align*}

While the mean function $\mu_\theta(\bm{x}_t, t)$ can be parameterized by a neural network (e.g., UNet~\citep{sd, unet}) directly, prior studies found that modeling the residual error $\epsilon(\bm{x}_t, t)$ instead works better empirically~\cite{ddpm}.
The two strategies are mathematically equivalent as $\mu_\theta(\bm{x}_t, t) = \frac{1}{\sqrt{\alpha_t}}(\bm{x}_t - \frac{1-\alpha_t}{\sqrt{1-\bar\alpha_t}}\epsilon(\bm{x}_t, t))$.

\paragraph{Conditional generation and negative prompts.}
The above formulation can be easily extended to conditional generation via classifier-free guidance~\citep{cfg}, widely adopted in contemporary diffusion models.
At each sampling step, the predicted error $\Tilde\epsilon$ is obtained by subtracting the unconditional signal ($c(``")$) from the conditional signal ($c(s)$), up to a scaling factor $w$:
\begin{align}
    \Tilde\epsilon_\theta(\bm{x}_t,  c(s),\! t) = (1 + w)\epsilon_\theta(\bm{x}_t, c(s), t) - w\epsilon_\theta(x_t, c(``"), t).
\end{align}
\vspace{-1mm}
If we replace this empty string with an actual text, then it becomes a \textbf{Negative Prompt}~\citep{np1, np2}, instructing the model \uline{what to exclude from the generated image.}

\section{\textbf{DPO-Diff} Framework}
\label{sec:method}


\paragraph{Formulation}
\uline{Our main insight is that prompt engineering can be formulated as a discrete optimization problem in the language space}.
Concretely, we represent the problem domain $\mathcal{S}$ as a sequence of $M$ words $w_i$ from a predefined vocabulary $\mathcal{V}$: $\mathcal{S} = \{w_1, w_2, \dots  w_M| \forall i, \ w_i \in \mathcal{V}\}$.
This space is generic enough to cover all possible sentences of lengths less than $M$ (when the empty string is present).
Let $G(s)$ denote a text-to-image generative model, and $s_{user}$, $s$ denote the user input and optimized prompt, respectively.
The optimization problem can be written as
\begin{align}
    &\min_{s \in \mathcal{S}} \mathcal{L}(G(s), s_{user})
    \label{eq:framework}
\end{align}
where $\mathcal{L}$ can be any objective function that measures the effectiveness of the learned prompt when used to generate images.
Following previous works~\citep{promptist}, we use clip loss $\text{CLIP}(I, s_{user})$~\citep{spherical} to measure the instruction-following ability of the diffusion model.

\paragraph{Application}
\uline{DPO-Diff framework is versatile for handling not only prompt enhancement but also adversarial attack tasks.}
Figure~\ref{fig:DPO-Diff-pipeline} illustrates the taxonomy of those two applications.
Adversarial attacks for text-to-image generative models can be defined as follows:
\begin{definition}
\label{def:adv}
Given a user input $s_{user}$, the attacker aims at slightly perturbing $s_{user}$ to disrupt the prompt-following ability of image generation, i.e., the resulting generated image is no longer describable by $s_{user}$.
\end{definition}
\vspace{-1.5mm}
To modify \eqref{eq:framework} into the adversarial attack, we can simply add a negative sign to the objective function ($\mathcal{L}$), and restrict the distance between an adversarial prompt ($s$) and user input ($s_{user}$).
Mathematically, this can be written as the following:
\begin{align}
    &\min_{s \in \mathcal{S}}\; \textcolor{red}{-\mathcal{L}}(G(s), s_{user}) \quad \text{s.t.} \ \textcolor{red}{d(s, s_{user}) \leq \lambda}, 
    \label{eq:attack}
\end{align}
where $d(s, s_{user})$ is a distance measure that forces the perturbed prompt ($s$) to be semantically similar to the user input ($s_{user}$).


\section{Compact search spaces for efficient prompt discovery}
\label{sec:method.space}
While the entire language space facilitates maximal generality, it is also unnecessarily inefficient as it is popularized with words irrelevant to the task.
We propose a family of compact search spaces that dynamically extracts a subset of task-relevant words to the user input.

\subsection{Application 1: Discovering adversarial prompts for model diagnosis}
\label{sec:method.space.attack}
\paragraph{Synonym Space for adversarial attack.}
In light of the constraint on semantic similarity in \eqref{eq:attack}, we build a search space for the adversarial prompts by substituting each word in the user input $s_{user}$ with its synonyms~\citep{synonym1}, preserving the meaning of the original sentence.
The synonyms can be found by either dictionary lookup or querying ChatGPT (Appendix~\ref{app:impl.space}).

\subsection{Application 2: Discovering enhanced prompts for image generation}
\label{sec:method.space.improve}
While the Synonym Space is suitable for attacking diffusion models, we found that it performs poorly on finding improved prompts.
This is in contradiction to LLMs where rephrasing user prompts can often lead to substantial gains~\citep{ape}.
One plausible reason is that contemporary diffusion models often rely on small-scale text encoders~\citep{clip, openclip, t5} that are much weaker than LLMs with many known limitations in understanding subtle language cues~\citep{structdiff, compdiff, llm-opt}.

\vspace{-5mm}
\paragraph{Antonym Space for negative prompt optimization.}
Inspired by these observations, \uline{we propose a novel solution to optimize for negative prompts instead} --- a unique concept that rises from classifier-free guidance~\citep{cfg} used in diffusion models (Section~\ref{sec:prelim}).
Recall that negative prompts instruct the diffusion model to remove contents in generated images, opposite to the positive prompt;
Intuitively, the model's output image can safely exclude the content with the opposite meaning to the words in the user input, thereby amplifying the concepts presented in the positive prompt.
We thereby build the space of negative prompts from the antonyms of each word in the user prompt.
The antonyms of words can also be obtained either via dictionary lookup or querying ChatGPT.
However unlike synonyms space, we concatenate the antonyms directly in comma separated format, mirroring the practical usage of negative prompts.
\uline{To the best of our knowledge, this is the first exploratory work on {automated negative prompt optimization}.}

\section{A Gradient-based solver for DPO-Diff}
\label{sec:method.algo}
Due to the query efficiency of white-box algorithms leveraging gradient information, we also explore a gradient-based method to solve \eqref{eq:framework} and \eqref{eq:attack}.
However, obtaining the text gradient is non-trivial due to two major challenges.
1) Backpropagating through the sampling steps of the diffusion inference process incurs high complexity w.r.t.\ memory and runtime, making it prohibitively expensive to obtain gradients~\citep{bp1, bp2}.
For samplers with 50 inference steps (e.g., DDIM~\citep{ddim}), it raises the runtime and memory cost by \textbf{50 times} compared to a single diffusion training step.
2) To further compute the gradient on text, the backpropagation needs to pass through a non-differentiable embedding lookup table.
To alleviate these issues, we propose \textbf{Shortcut Text Gradient}, an efficient replacement for text gradient that can be obtained with \textbf{constant memory and runtime}.
Our solution to (1) and (2) are discussed in ~\Cref{sec:method.algo.backprop} and ~\Cref{sec:method.algo.gs} respectively.
Moreover, ~\Cref{sec:method.algo.es} discusses how to sample from the learned text distribution via evolutionary search.

\subsection{Shortcut Text Gradient}

\subsubsection{Backpropagating through diffusion sampling steps}
\label{sec:method.algo.backprop}
To efficiently backpropagate the loss from the final image to intermediate feature \uline{at an arbitrary step}, our key idea is to trim the computation graph down to only a few steps \uline{from both ends}, resulting in a constant number of backward passes (Figure~\ref{fig:DPO-Diff-pipeline}.
To achieve this, three operations are required through the image generation process:

\textbf{\textit{(1) Sampling without gradient from step $T$ (noise) to $t$}.}
We disable gradients up to step $t$, thereby eliminating the need for backpropagation from $T$ to $t$.

\textbf{\textit{(2) Enable gradient from $t$ to $t - K$}.}
The backward computation graph is enabled for the $K$ step starting at $t$.

\textbf{\textit{(3) Estimating $\bm{x}_0$ directly from $\bm{x}_{t-K}$}.}
To bypass the final $t - K$ steps of UNet, a naive solution is to directly decode and feed the noisy image $\bm{x}_{t-K}$ to the loss function.
However, due to distribution shifts, these intermediate images often cannot be properly interpreted by downstream modules such as VAE decoder~\cite{sd} and CLIP~\citep{apm}.
Instead, we propose to use the following closed-form estimation of the final image $\hat{\bm{x}}_0$~\cite{ddim} to bridge the gap:
\begin{align*}
    \bm{\hat x}_0 = \frac{1}{\sqrt{\Bar\alpha_{t-K}}} (\bm{x}_{t-K} - \sqrt{1 - \Bar\alpha_{t-K}}\bm{\hat\epsilon}_\theta(\bm{x}_{t-K}, t-K))
\end{align*}
This way, the Jacobian of $\hat{\bm{x}}_0$ w.r.t.\ $\bm{x}_{t-K}$ can be computed analytically, with complexity independent of $t$.
Note that the above estimation of $x_0$ is not a trick --- it directly comes from a mathematically equivalent interpretation of the diffusion model, where each inference step can be viewed as computing $\hat{\bm{x}}_0$ and plugging it into $q(\bm{x}_{t-K}|\bm{x}_{t}, \hat{\bm{x}}_0)$ to obtain the transitional probability (See Appendix~\ref{app:theory} for the derivation).

\textbf{Remark 1: Complexity Analysis}
With Shortcut Text Gradient, the computational cost of backpropagating through the inference process can be reduced to $K$-times backward passes of UNet. When we set $t = T$ and $K = T$, it becomes the full-text gradient; When $K = 1$, the computation costs reduce to a single backward pass.
\textbf{Remark 2: Connection to ReFL~\citep{refl}.}
ReFL is a post-hoc alignment method for finetuning diffusion models.
It also adopts the estimation of $x_0$ when optimizing diffusion model against a scorer, which is mathematically equivalent to the case when $K = 1$.


\subsubsection{Backpropagating through embeddings lookup table}
\label{sec:method.algo.gs}
In diffusion models, a tokenizer transforms text input into indices, which will be used to query a lookup table for corresponding word embeddings.
To allow further propagating gradients through this non-differentiable indexing operation, we relax the categorical choice of words into a continuous probability of words and learn a distribution over them.
We parameterize the distribution using Gumbel Softmax~\citep{gumbel} with uniform temperature ($\eta = 1$):
\begin{align}
    \Tilde{e} = \sum_{i=1}^{|\mathcal{V}|} e_i * \frac{\exp\left((\log\alpha_i + g_i)/\eta\right)}{\sum_{i=1}^{|\mathcal{V}|}\exp\left((\log\alpha_i + g_i)/\eta\right)}
\end{align}
where $\alpha$ (a $|\mathcal{V}|$-dimensional vector) denotes the learnable parameter, $g$ denotes the Gumbel random variable, $e_i$ is the embedding of word $i$, and $\Tilde{e}$ is the output mixed embedding.

\subsection{Efficient sampling with Evolutionary Search}
\label{sec:method.algo.es}
To efficiently sample candidate prompts from the learned Gumbel ``distribution", we adopt evolutionary search, known for its sample efficiency~\citep{ea, fbnet}.
Our adaptation of the evolutionary algorithm to the prompt optimization task involves three key steps:
\textbf{(1) Genotype Definition:} We define the genotype of each candidate prompt as the list of searched words from the compact search space, where modifications to the genotype correspond to edits the word choices in the prompt.
\textbf{(2) Population Initialization:} We initialize the algorithm's population with samples drawn from the learned Gumbel distribution to bias the starting candidates towards regions of high potential.
\textbf{(3) Evolutionary Operations:} We execute a standard evolutionary search, including several rounds of crossover and mutation~\citep{ea}, culminating in the selection of the top candidate as the optimized prompt.
Details of the complete \textbf{DPO-Diff} algorithm, including specific hyperparameters, are available in Algorithm~\ref{algo:dpo} of ~\Cref{app:algorithm} and discussed further in ~\Cref{app:impl.hyperparam}.


\paragraph{Remark: Extending DPO-Diff to Blackbox Settings.}
In cases where the model is only accessible through forward API, our Evolutionary Search (ES) module can be used as a stand-alone black-box optimizer, thereby expanding the applicability of our framework.
As further ablated in Section \ref{sec:ablate.algo}, ES archives descent results with enough queries.



\section{Experiments}
\label{sec:exp}

\begin{figure}
    \centering
 
    \includegraphics[width=0.8\linewidth]{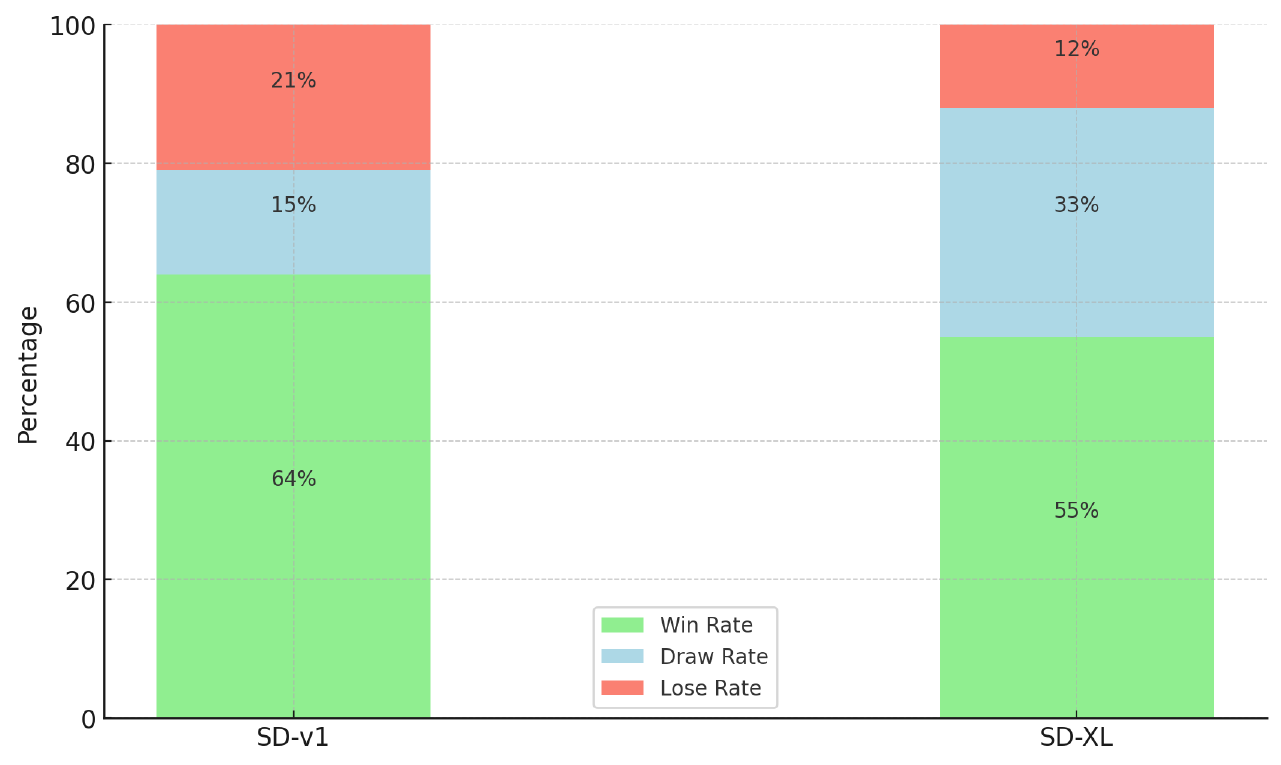}
    
    \caption{Win Rate of DPO-Diff versus Promptist on prompt improvement task with \textbf{Human Evaluation}. DPO-Diff surpasses or matches the performance of Promptist 79\% of times on SD-v1 and 88\% of times on SD-XL.}
    \label{fig:win_rate}
    
\end{figure}

\subsection{Experimental Setup}
\label{sec:setup}

    \paragraph{Dataset preparation.}
    To encourage semantic diversity, we collect a prompt dataset from three sources: DiffusionDB~\citep{diffusiondb}, ChatGPT generated prompts~\citep{chatgpt}, and COCO~\citep{coco}.
    For each source, we filter 100 \textbf{``hard prompts"} with a clip loss higher (lower for adversarial attack) than a threshold, amounting to \textbf{600 prompts} in total for two tasks.
    Due to space limit, we include preparation details in Appendix~\ref{app:exp_setting.dataset}.

    \paragraph{Evaluation Metrics.}
    All methods are evaluated quantitatively using the clip loss~\citep{vqgan-clip} and Human Preference Score v2 (HPSv2).
    HPSv2 is a CLIP-based model trained to predict human preferences on images generated from text.
    For base models, we adopt \textit{Stable Diffusion v1-4}.
    Each prompt is evaluated under two random seeds (shared across different methods).
    \textbf{Besides automatic evaluation metrics, we also conduct human evaluations on the generated images}, following the protocol specified in Appendix~\ref{app:exp_setting.evaluation}.
    
    \paragraph{Optimization Parameters.}
    We use the Spherical CLIP Loss~\citep{spherical} as the objective function, {which ranges between 0.75 and 0.85} for most inputs.
    The $K$ for the Shortcut Text Gradient is set to 1, as it produces effective supervision signals with minimal cost.
    To generate the search spaces, we prompt ChatGPT (\texttt{gpt-4-1106-preview}) for at most 5 substitutes of each word in the user prompt.
    Furthermore, we use a fixed set of hyperparameters for both prompt improvement and adversarial attacks.
    We include a detailed discussion on all the hyperparameters and search space generation in Appendix~\ref{app:impl}.

\subsection{Application 1 - Adversarial Attack}    

\begin{table}[t]
    \centering
    \caption{Quantitative comparison of different prompting methods. We evaluate the generated images using both Spherical CLIP loss and Human Preference Score v2 (HPSv2) score (renormalized to 0-100) - a score trained to mimic human preferences on images generated from text. \textbf{Our method achieves the best result on both prompt improvement and adversarial attack among all methods}, including the previous SOTA - Promptist.}

    \resizebox{0.98\linewidth}{!}{
        \begin{tabular}{lcccccccc}
            \toprule
            \multirow{2}{*}{\textbf{Attack}}
            & \multicolumn{2}{c}{\textbf{DiffusionDB}} & \multicolumn{2}{c}{\textbf{COCO}} & \multicolumn{2}{c}{\textbf{ChatGPT}} \\
            & \textbf{CLIP$\uparrow$} & \textbf{HPSv2$\downarrow$} & \textbf{CLIP$\uparrow$} & \textbf{HPSv2$\downarrow$} & \textbf{CLIP$\uparrow$} & \textbf{HPSv2$\downarrow$} \\ \hline

            User   & 0.76 ± 0.03 & 75.28 ± 8.54 & 0.77 ± 0.03 & 75.28 ± 8.54 & 0.77 ± 0.02 & 73.57 ± 10.81 \\ \hline
            DPO-Diff     & \bf{0.86 ± 0.05} & \bf{40.52 ± 11.88} & \bf{0.94 ± 0.04} & \bf{45.85 ± 10.18} & \bf{0.95 ± 0.05} & \bf{39.73 ± 16.73} \\ \hline
        \end{tabular}
    }

    \resizebox{0.98\linewidth}{!}{
        \begin{tabular}{lcccccccccccc}
            \toprule
            \multirow{3}{*}{\textbf{Improve}}
            & \multicolumn{2}{c}{\textbf{DiffusionDB}} & \multicolumn{2}{c}{\textbf{COCO}} & \multicolumn{2}{c}{\textbf{ChatGPT}} \\
            & \textbf{CLIP$\downarrow$} & \textbf{HPSv2$\uparrow$} & \textbf{CLIP$\downarrow$} & \textbf{HPSv2$\uparrow$} & \textbf{CLIP$\downarrow$} & \textbf{HPSv2$\uparrow$} \\ \hline

            User            & 0.87 ± 0.02 & 48.81 ± 09.71 & 0.87 ± 0.01 & 50.33 ± 4.85 & 0.84 ± 0.01 & 53.36 ± 5.17 \\ \hline
            Manual          & 0.89 ± 0.04 & 51.43 ± 10.29 & - & - & - & - \\ \hline
            Promptist       & 0.88 ± 0.02 & 54.39 ± 12.47 & 0.87 ± 0.03 & 50.08 ± 7.43 & 0.85 ± 0.02 & 59.32 ± 6.50 \\ \hline
            DPO-Diff            &\bf{0.81 ± 0.03}&\bf{62.37 ± 12.48}&\bf{0.82 ± 0.02}&\bf{61.26 ± 0.77}&\bf{0.78 ± 0.03}&\bf{67.71 ± 6.46}\\ \hline
        \end{tabular}
    }

\label{tab:quantative}
\end{table}
    
    Unlike RLHF-based prompt-engineering methods (e.g. Promptist~\citep{promptist}) that require finetuning a prompt generator when adapting to a new task, DPO-Diff, as a train-free method, can be seamlessly applied to finding adversarial prompts by simply reversing the sign of the objective function.

    In this section, we demonstrate that DPO-Diff is capable of discovering adversarial prompts that destroy the prompt-following ability of Stable Diffusion.

\begin{figure*}[ht!]
    \vspace{+4mm}
    \centering
    \begin{tabularx}{\textwidth}{>{\centering\arraybackslash}X>{\centering\arraybackslash}X>{\centering\arraybackslash}X}
        \toprule\\[-10pt] {\textbf{User Input}} & {\textbf{Promptist - Modifiers}} & {\textbf{Negative Prompts by DPO-Diff}} \\[0pt]\hline

        \\[-12pt]
        {\prompt{The yellow sun was descending beyond the violet peaks, coloring the sky with hot shades.}} &
        {\prompt{by Greg Rutkowski and Raymond Swanland, ..., ultra realistic digital art}} &
        {\prompt{red, soaring, red, valleys, white, floor, Plain, body, focus, surreal}} \\[1pt]
        \includegraphics[width=0.8\linewidth]{figures/images/main_paper/yellow-sun-ori.pdf} &
        \includegraphics[width=0.8\linewidth]{figures/images/main_paper/yellow-sun-pts.pdf} &
        \includegraphics[width=0.8\linewidth]{figures/images/main_paper/yellow-sun-dpo.pdf}
        \\[-2pt]

        \\[-12pt]
        {\prompt{A dedicated \orange{gardener} tending to a ... \orange{bonsai tree}.}} &
        {\prompt{intricate, elegant, highly detailed, ..., sharp focus, illustration}} &
        {\prompt{irresponsible, overlooking, huge, herb, ...}} \\[1pt]
        \includegraphics[width=0.8\linewidth]{figures/images/main_paper/garden-ori.pdf} &
        \includegraphics[width=0.8\linewidth]{figures/images/main_paper/garden-pts.pdf} &
        \includegraphics[width=0.8\linewidth]{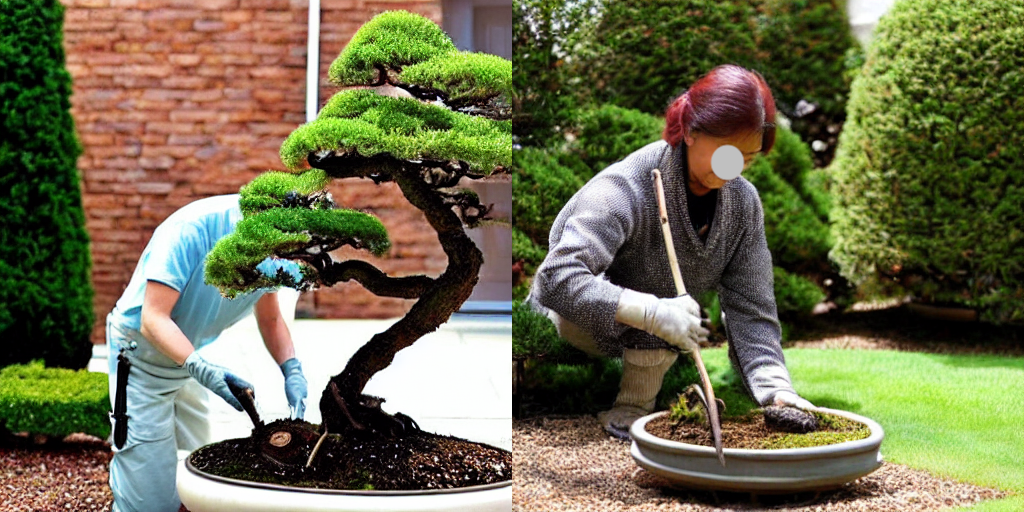}
        \\[-2pt]

        \\[-12pt]
        {\prompt{magical ... \orange{bear with glowing magical marks} ...}} &
        {\prompt{D\&D, fantasy, cinematic lighting, ..., art by artgerm and greg ...}} &
        {\prompt{normal, elephant, ..., heaps, tundra, advance, Boring, black, ...}} \\[1pt]
        \includegraphics[width=0.8\linewidth]{figures/images/main_paper/bear-ori.pdf} &
        \includegraphics[width=0.8\linewidth]{figures/images/main_paper/bear-pts.pdf} &
        \includegraphics[width=0.8\linewidth]{figures/images/main_paper/bear-dpo.pdf}
        \\[-2pt]


        \bottomrule

    \end{tabularx}

\caption{Example images generated by improved negative prompts from DPO-Diff v.s. Promptist (More in~\Cref{fig:improve_more}). Compared with Promptist, DPO-Diff was able to generate images that better capture the content in the original prompt.}
\vspace{+4mm}
\label{fig:improve}
\end{figure*}

\begin{figure*}[ht!]
\centering
    \begin{tabularx}{\textwidth}{>{\centering\arraybackslash}X>{\centering\arraybackslash}X}
        \toprule\\[-10pt] {\textbf{User Input}} & {\textbf{Adversarial Prompts by DPO-Diff}} \\[0pt]\hline

        \\[-12pt]
        {\prompt{A vibrant sunset casting hues of orange and pink.}} &
        {\prompt{The vibrant \orange{sundown} casting \orange{tones} of orange \orange{plus blush}.}} \\[2pt]
        \includegraphics[width=0.8\linewidth]{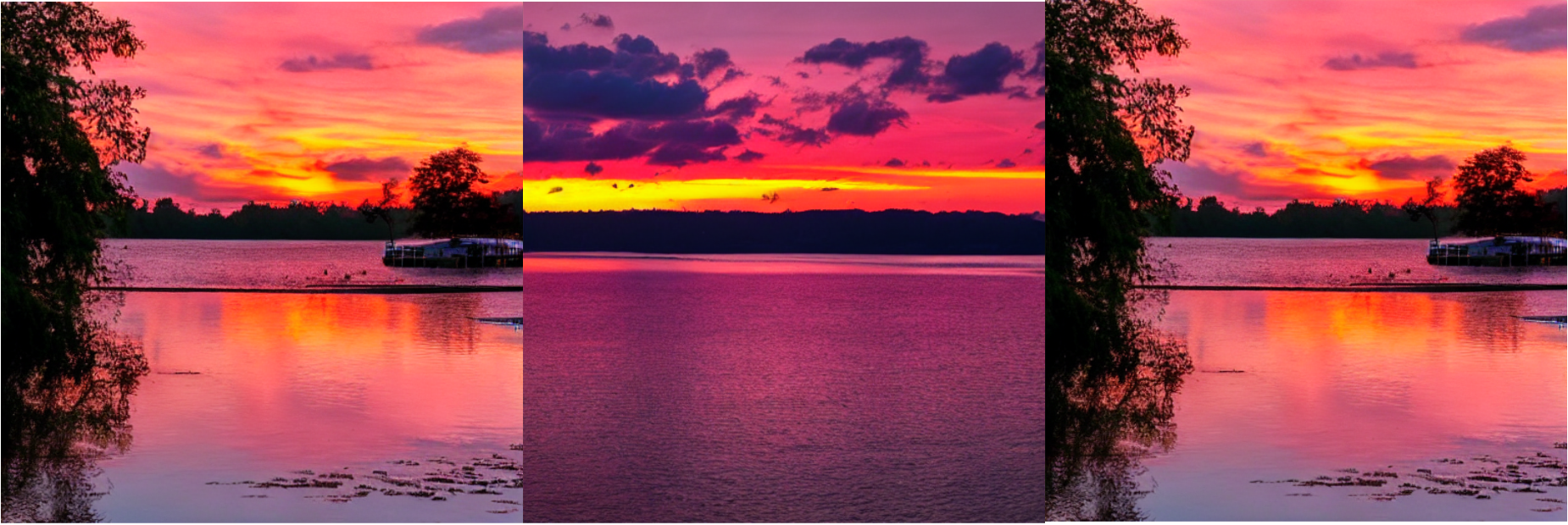} &
        \includegraphics[width=0.8\linewidth]{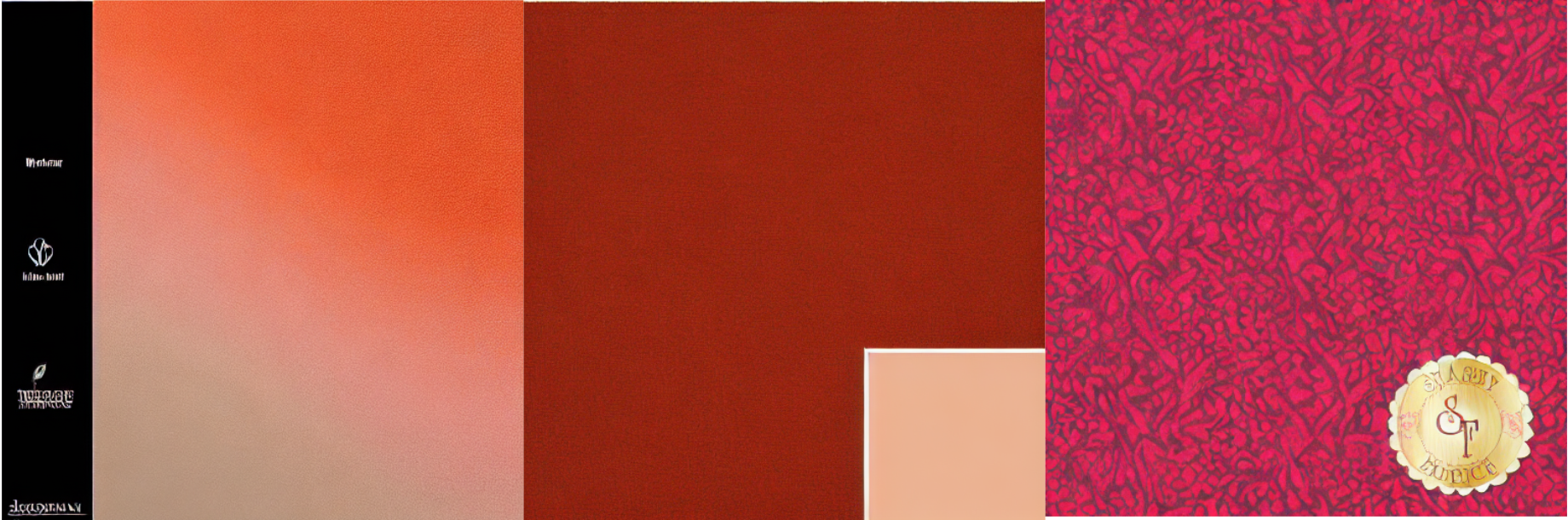}
        \\[-2pt] 

        \\[-12pt]
        {\prompt{A group of friends gather around a table for a meal.}} &
        {\prompt{A \orange{party} of friends \orange{cluster} around a \orange{surface} for a \orange{food}}} \\[2pt]
        \includegraphics[width=0.8\linewidth]{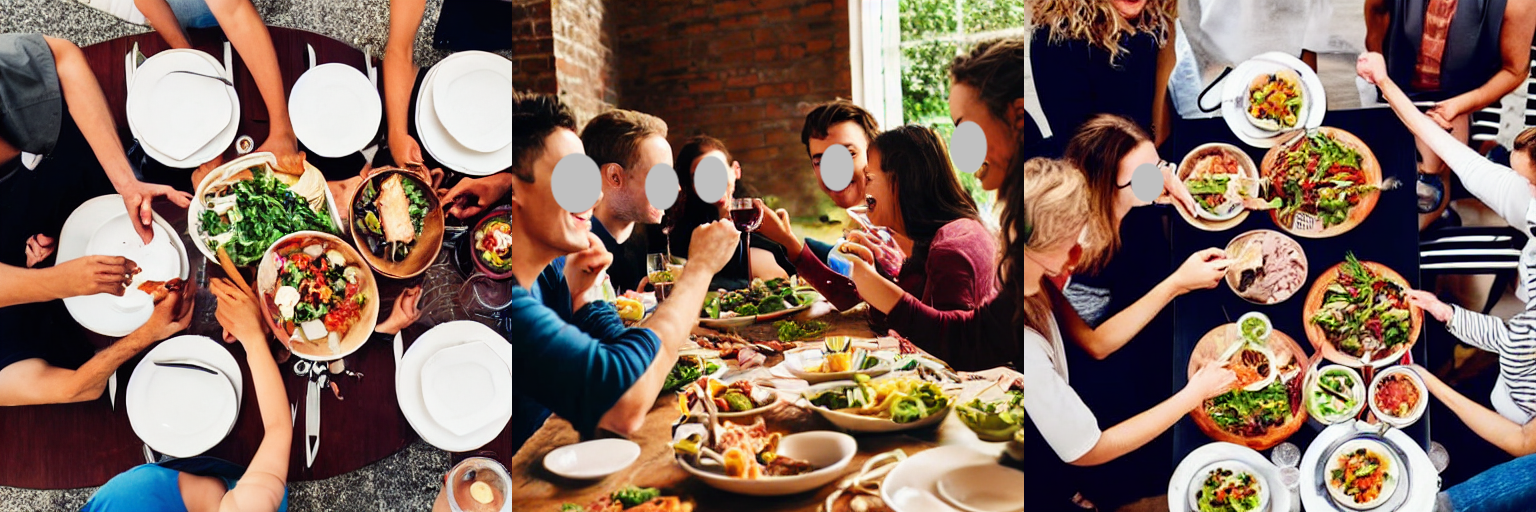} &
        \includegraphics[width=0.8\linewidth]{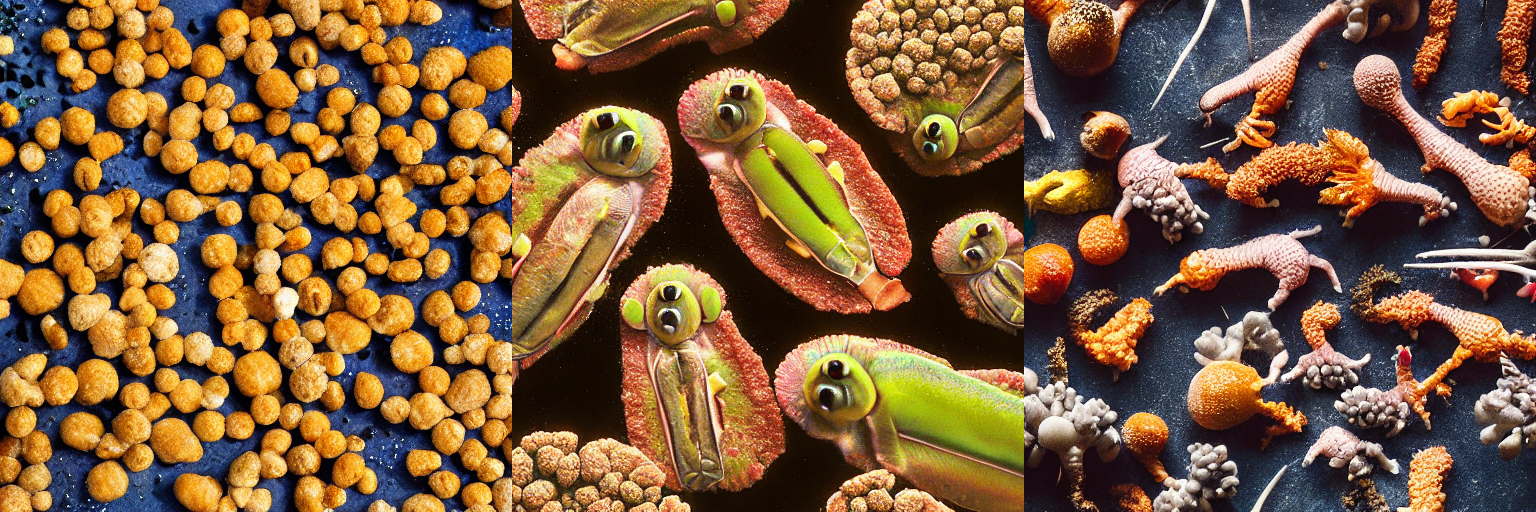}
        \\[-2pt] 

        \\[-12pt]
        {\prompt{oil painting of a mountain landscape}} &
        {\prompt{\orange{grease picture illustrating one} mountain \orange{view}}} \\[1pt]
        \includegraphics[width=0.8\linewidth]{figures/images/main_paper/picture-ori.pdf} &
        \includegraphics[width=0.8\linewidth]{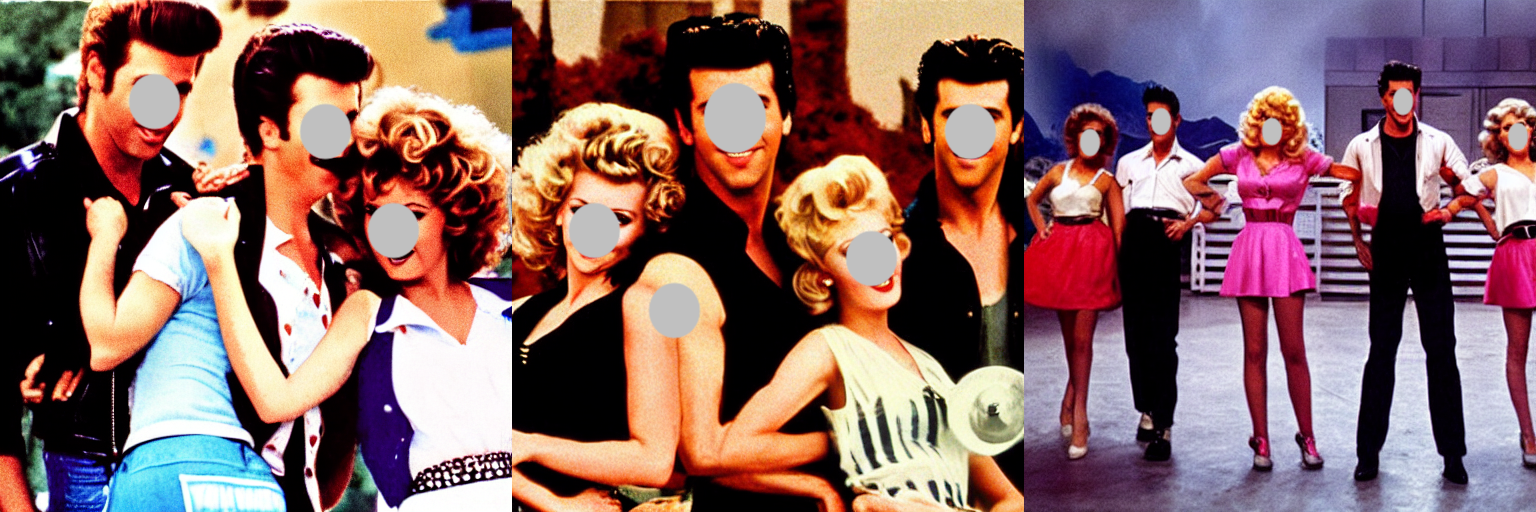}
        \\[-2pt]

        \bottomrule
    \end{tabularx}
    \caption{Example images generated by adversarial prompts from DPO-Diff. While keeping the overall meaning similar to the user input, adversarial prompts completely destroy the prompt-following ability of the Stable Diffusion model. (More in ~\Cref{fig:attack_more})}
\label{fig:attack}
\end{figure*}

    \uline{As suggested by \eqref{eq:attack}, a successful adversarial prompt must not change the original intention of the user prompt.}
    While we specified this constraint to ChatGPT when building the Synonyms Space, occasionally ChatGPT might mistake a word for the synonyms.
    To address this, during the evolutionary search phase, \textbf{we perform rejection sampling to refuse candidate prompts that have different meanings to the user input.}
    Concretely, we enforce their cosine similarity in embedding space to be higher than 0.9 (More on this can be found in \cref{app:exp_setting}).

    Table~\ref{tab:quantative} summarizes the quantitative results.
    Our method is able to perturb the original prompt to adversarial directions, resulting in a substantial increase in the clip loss.
    Figure \ref{fig:attack} also visualizes a set of intriguing images generated by the adversarial prompts.
    \textbf{We can see that DPO-Diff can effectively explore the text regions where Stable Diffusion fails to interpret.}

    \paragraph{Human Evaluation.}
    We further ask human judges to check whether the attack generated by DPO-Diff is successful or not. Since previous prompt optimization methods do not apply to this task, we only ask the evaluators to compare DPO-Diff against the original image. \textbf{DPO-Diff achieves an average success rate (ASR) of 44\% on SD-v1.}
    Considering that Stable Diffusion models are trained on a large amount of caption corpus, this success rate is fairly substantial.

\subsection{Application 2: Prompt Improvement}
    In this section, we apply DPO-Diff to craft prompts that improve the prompt-following ability of the generated images.
    We compare our method with three baselines:
    (1) User Input.
    (2) Human Engineered Prompts (available only on DiffusionDB)~\citep{diffusiondb}.
    (3) Promptist~\citep{promptist}, trained to mimic the human-crafted prompt provided in DiffusionDB.

    Table \ref{tab:quantative} summarizes the result.
    Among all methods, DPO-Diff achieves the best results under both Spherical CLIP loss and Human Preference Score (HPSv2) score.
    On the other hand, \uline{our findings suggest that both human-engineered and Promptist-optimized prompts do not improve the relevance between generated images and user intention}.
    The reason is that these methods merely add a set of aesthetic modifiers to the original prompt, irrelevant to the semantics of user input.
    This can be further observed from the qualitative examples in Figure~\ref{fig:improve}, where images generated by Promptist often also do not follow the prompts well.

    \paragraph{Human Evaluation.}
    We further ask human judges to rate DPO-Diff and Promptist on \uline{how well the generated images follow the user prompt.}
    Figure~\ref{fig:win_rate} summarizes the win/draw/loss rate of DPO-Diff against Promptist;
    \textbf{The result shows that DPO-Diff surpasses or matches Promptist in human rate 79\% of times on SD-v1.}

\subsection{Qualitative analysis of search progression}
 


\begin{figure}
\centering
    \begin{tabularx}{\linewidth}{>{\centering\arraybackslash}X}

        {\prompt{User Prompt: A bunch of luggage that is in front of a truck.}} \\[2pt]
        \includegraphics[width=0.18\linewidth]{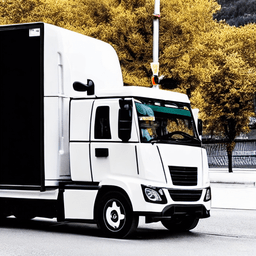}
        \includegraphics[width=0.18\linewidth]{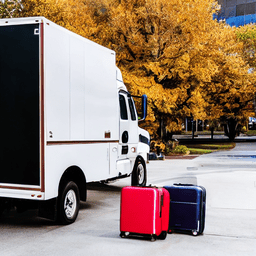}
        \includegraphics[width=0.18\linewidth]{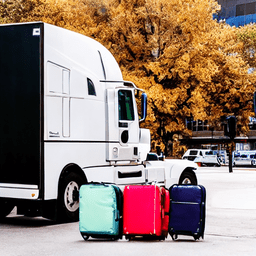}
        \includegraphics[width=0.18\linewidth]{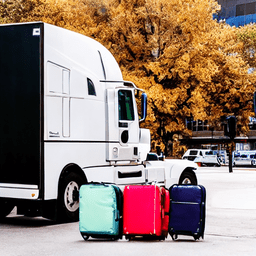}
        \includegraphics[width=0.18\linewidth]{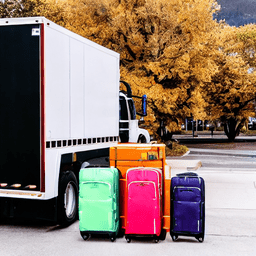}
        \\[-2pt] 

        {\prompt{User Prompt: There are cranes in the water and a boat in the distance.}} \\[2pt]
        \includegraphics[width=0.18\linewidth]{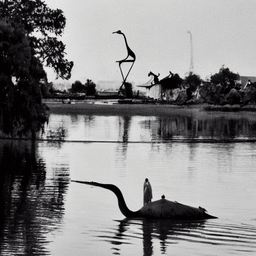}
        \includegraphics[width=0.18\linewidth]{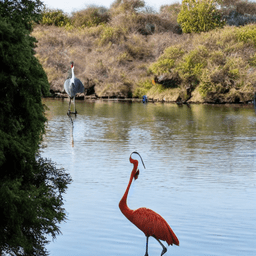}
        \includegraphics[width=0.18\linewidth]{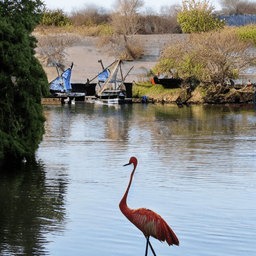}
        \includegraphics[width=0.18\linewidth]{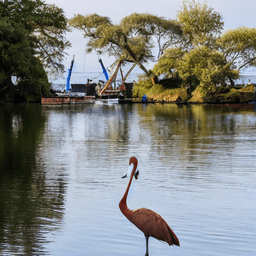}
        \includegraphics[width=0.18\linewidth]{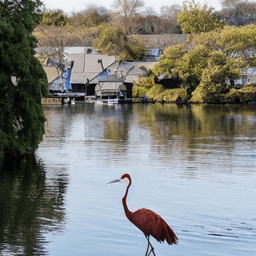}
        \\[-2pt] 

        {\prompt{User Prompt: harry potter shrek, movie poster, movie still, ...}} \\[2pt]
        \includegraphics[width=0.18\linewidth]{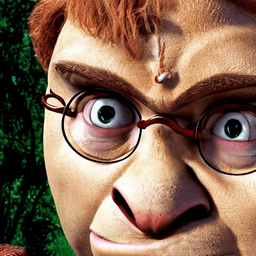}
        \includegraphics[width=0.18\linewidth]{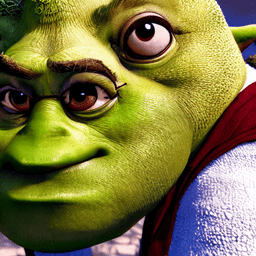}
        \includegraphics[width=0.18\linewidth]{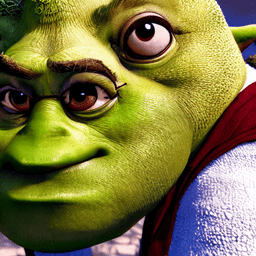}
        \includegraphics[width=0.18\linewidth]{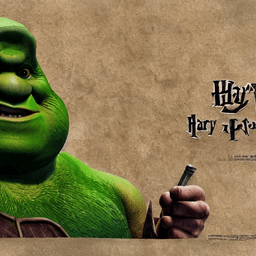}
        \includegraphics[width=0.18\linewidth]{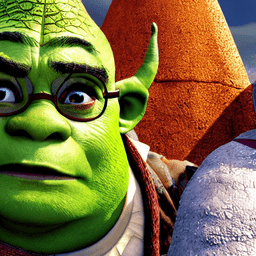}
        \\[-2pt] 

    \end{tabularx}
    \caption{Evolution of the optimized images from DPO-Diff at iteration 0, 10, 20, 40, and 80 (left to right). Noticeable improvements can be observed as early as 10 iterations, and the progression is surprisingly interpretable.}
\vspace{-4mm}
\label{fig:progression}
\end{figure}
To examine the convergence of our search algorithm qualitatively, we plot the progression of optimized images at various evaluation stages.
We set the target iterations at 0 (the original image), 10, 20, 40, and 80 to illustrate the changes, and showcase the image with the highest clip loss among all evaluated candidates at each iteration.

\Cref{fig:progression} illustrates some example trajectories.
In most cases, the images exhibit noticeable improvement in aligning with the user's prompt at as early as the 10th iteration, and continue to improve.
Moreover, the progression are surprisingly interpretable.
For instance, with the prompt: "\textbf{A bunch of luggage} in front of a truck," the initial image fails to include any luggage, featuring only the truck;
However, as the optimization continues, we can see that DPO-Diff incrementally adds more luggage to the scene.

\section{Ablation Study}

\begin{figure*}[t]
    \centering
 
    \includegraphics[width=0.48\linewidth]{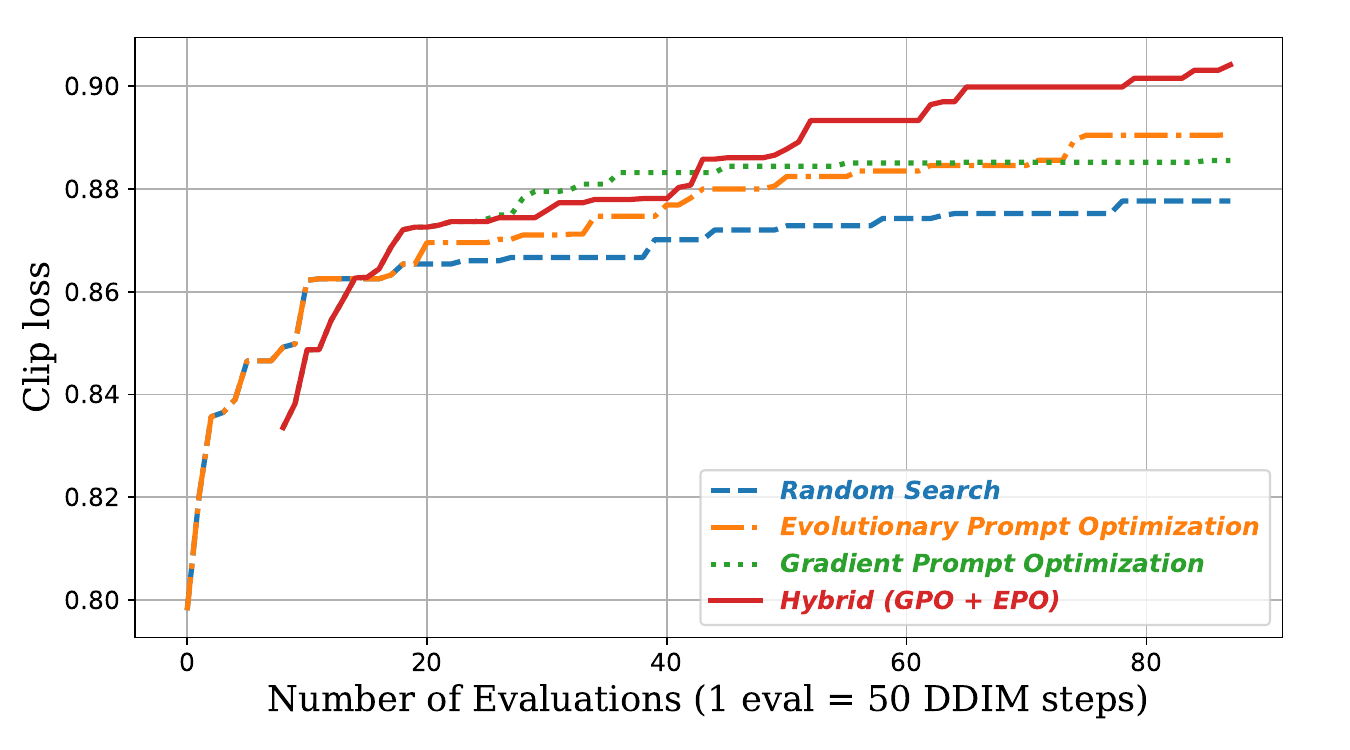}
    \label{fig:compare.attack}
    \includegraphics[width=0.48\linewidth]{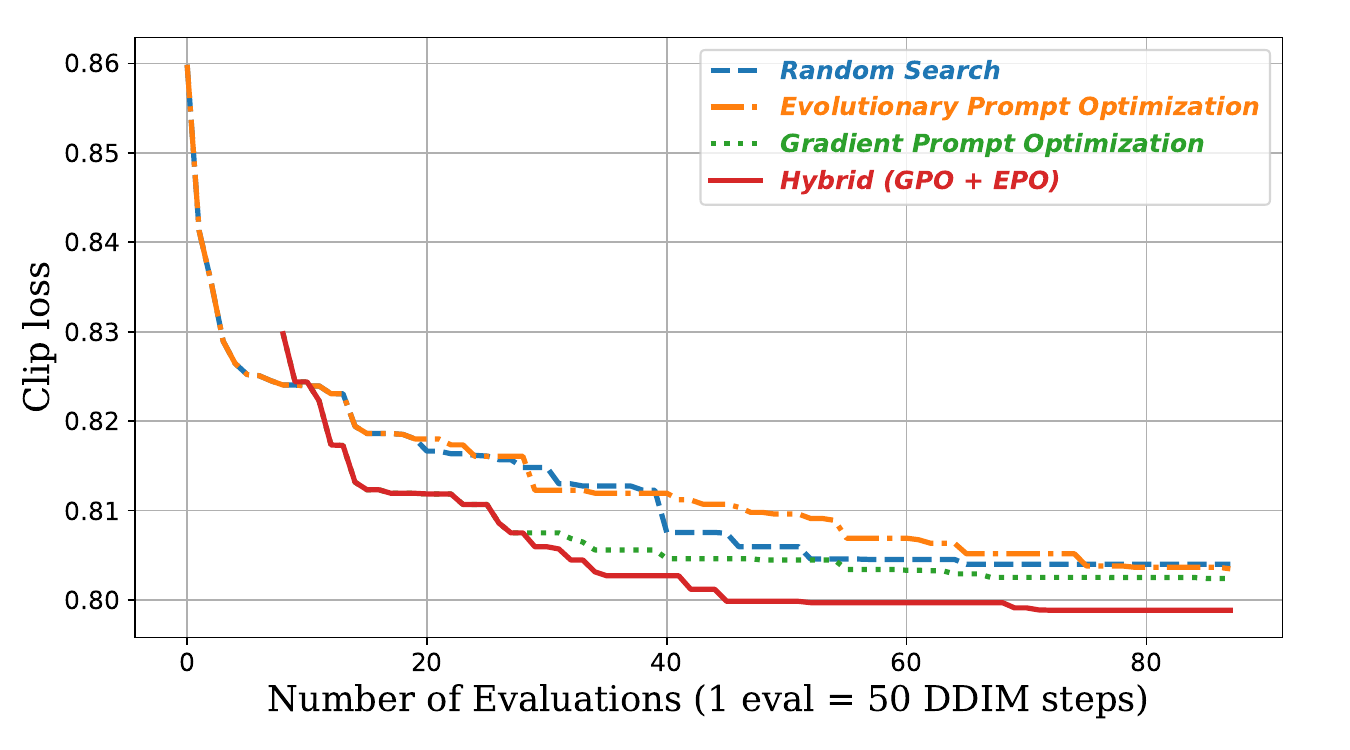}
    \label{fig:compare.improve}
    
    \caption{Learning curves of different search algorithms in solving DPO-Diff.}
    \label{fig:compare}
    
\end{figure*}

We conduct ablation studies on DPO-Diff using 30 randomly sampled prompts, 10 from each source.
Each search algorithm is run under 4 random seeds.

\subsection{Comparison of different search algorithms.}
\label{sec:ablate.algo}
We compare four search algorithms for DPO-Diff: Random Search (RS), Evolution Prompt Optimization (EPO), Gradient-based Prompt Optimization (GPO), and the full algorithm (GPO + ES).
Figure \ref{fig:compare} shows their performance under different search budgets (number of evaluations)\footnote{Since the runtime of backpropagation through one-step diffusion sampling is negligible w.r.t. the full sampling process (50 steps for DDIM sampler), we count it the same as one inference step.};
While GPO tops EPO under low budgets, it also plateaus quicker as randomly drawing from the learned distribution is sample-inefficient.
\textbf{Combining GPO with EPO achieves the best overall performance.}


\subsection{Negative prompt v.s. positive prompt optimization}
\label{sec:ablate.nvp}
One finding in our work is that optimizing negative prompts (Antonyms Space) is more effective than positive prompts (Synonyms Space) for Stable Diffusion.
To verify the strength of these spaces, we randomly sample 100 prompts for each space and compute their average clip loss of generated images.
Table \ref{tab:ablate.nvp} suggests that Antonyms Space contains candidates with consistently lower clip loss than Synonyms Space.

\begin{table}[t]
    \centering
    \caption{Quantitative evaluation of optimizing negative prompts (w/ Antonyms Space) and positive prompts (w/ Synonym Space) for Stable Diffusion.}

    \resizebox{0.9\linewidth}{!}{
        \begin{tabular}{lccccccc}
            \hline
            \textbf{Prompt} & \textbf{DiffusionDB} & \textbf{ChatGPT}     & \textbf{COCO} \\ \hline
            User Input      & 0.8741 ± 0.0203      & 0.8159 ± 0.0100      & 0.8606 ± 0.0096 \\ 
            Positive Prompt & 0.8747 ± 0.0189      & 0.8304 ± 0.0284      & 0.8624 ± 0.0141 \\ 
            Negative Prompt & \bf{0.8579 ± 0.0242} & \bf{0.8133 ± 0.0197} & \bf{0.8403 ± 0.0210} \\ \hline
            \end{tabular}
    }
\label{tab:ablate.nvp}
\end{table}
\section{Discussion on the Search v.s. Learning paradigms for utilizing computatons}
\label{sec:method.comparison}
This section elucidates the relationship between two distinct prompt optimization approaches for diffusion models: DPO-Diff (ours) and Promptist.
While Promptist represents a pioneering effort, it is important to discuss why DPO-Diff remains essential.

\paragraph{Limitations of Promptist}
Promptist utilizes the Reinforcement Learning from Human Feedback (RLHF)~\citep{rlhf1, rlhf2, chatgpt} approach to fine-tune a language model to generate improved prompts.
RLHF relies on paired data $\langle\texttt{user\_prompt}, \texttt{improved\_prompt}\rangle$, which is scarce for diffusion models and challenging to curate.
This is primarily because generating the improved prompts requires extensive trial-and-error by human experts, essentially performing what DPO-Diff automates.
In fact, the performance limit exhibited by Promptist is exactly caused by this lack of data:
The data used by Promptist from DiffusionDB predominantly features aesthetic modifiers that do not alter the semantics of the prompts
This limits its effectiveness to aesthetic enhancements and not addressing the core need for semantic accuracy in prompts. 
Consequently, it struggles with semantic prompt adherence and lacks flexibility in modifying prompts for tasks such as adversarial attacks.

\paragraph{Two complementary computational paradigms}
Promptist and DPO-Diff represent two major paradigms for effectively utilizing computation: learning and searching, respectively~\citep{lesson}.
Learning-based approach of Promptist enhances performance through more parameters and larger datasets, whereas the search-based approach of DPO-Diff focuses on maximizing the potential of pretrained models via post-hoc optimization.
Although learning-based methods require high quality paired data, they can be efficiently deployed once trained;
On the other hand, search-based methods generate high quality prompts, but are much slower to execute.
Therefore, as~\citet{lesson} highlights, these paradigms are complementary rather than competitive.
DPO-Diff can be leveraged to generate high quality dataset offline, which can subsequently train Promptist to reduce inference latency effectively.
Together, they pave the way for a comprehensive solution to prompt optimization for diffusion models, positioning DPO-Diff as the first search-based solution to address this problem.

\section{Conclusions}
\label{sec:conclusion}
This work presents DPO-Diff, the first gradient-based framework for optimizing discrete prompts.
We formulate prompt optimization as a discrete optimization problem over the text space.
To improve the search efficiency, we introduce a family of compact search spaces based on relevant word substitutions, as well as design a generic computational method for computing the discrete text gradient for diffusion model's inference process.
DPO-Diff is generic - We demonstrate that it can be directly applied to effectively discover both refined prompts to aid image generation and adversarial prompts for model diagnosis.
We hope that the proposed framework helps open up new possibilities in developing advanced prompt optimization methods for text-based image generation tasks.
\paragraph{Limitations} To motivate future work, we discuss the known limitations of DPO-Diff in Appendix~\ref{app:limitation}.

\newpage
\section*{Acknowledgements}
The work is partially supported by NSF 2048280, 2331966, 2325121, 2244760, ONR N00014-23-1-2300, and finished during the primary contributor's internship at Google.
Special thanks to Liangzhe Yuan, Long Zhao, and Han Zhang for providing invaluable guidance and accommodations throughout the internship.

\section*{Impact Statement}
This work makes contribution to both research and practical applications of text-to-image (T2I) generation.
For the research community, we introduce a new paradigm to optimize prompts for text-to-image generation, demonstrating promising results across various prompts, models, and metrics.
This approach could provide valuable insights for future studies on diffusion models.
For industrial applications, our method can be easily adopted by T2I generation service providers to improve the performance of their models, or used as an offline data generator for training prompt agents.

\bibliography{example_paper}
\bibliographystyle{icml2024}

\newpage
\appendix
\label{app}
\onecolumn

\section{Limitations}
\label{app:limitation}
We identify the following known limitations of the proposed method:
\textbf{Search cost}
Our method requires multiple passes through the diffusion model to optimize a given prompt, which incurs a modest amount of search costs.
One promising solution is to use DPO-Diff to generate free paired data for RLHF (e.g. Promptist), which we leave for future work to explore.
\textbf{Text encoder}
moreover, while DPO-Diff improves the faithfulness of the generated image, the performance is upper-bounded by the limitations of the underlying text encoder.
For example, the clip text encoder used in stable diffusion tends to discard spatial relationships in text, which in principle must be resolved by improving the model itself, such as augmenting the diffusion model with a powerful LLM~\cite{grounded, compdiff, structdiff}.
\textbf{Clip loss}
The clip loss used in DPO-Diff might not always align with human evaluation.
Automatic scoring metrics that better reflect human judgment, similar to the reward models used in instruction fine-tuning, can further aid the discovery of improved prompts.
\textbf{Synonyms generated by ChatGPT}
For adversarial attack task, ChatGPT sometimes generate incorrect synonyms.
Although we use reject-sampling based on sentence embedding similarity as a posthoc fix, it is not completely accurate.
This may impact the validity of adversarial prompts, as by definition they must preserve the user's original intent.
We address this in human evaluation by asking the raters to consider this factor when determining the success of an attack.

\section{Benefit of optimizing discrete text prompts over soft prompts}
Optimizing discrete text prompts offers two major advantages over tuning soft prompts, primarily in two areas:
\textbf{(1) Interpretability:} The results of discrete prompt optimization are texts that are naturally human interpretable.
This also facilitates direct use in fine-tuning RLHF-based agents like Promptist.
\textbf{(2) Simplified Search Space:} Our preliminary attempts with continuous text embeddings revealed challenges in achieving convergence, even on toy examples.
The reason, we conjecture was that the gradients backpropagated through the denoising process have low info-to-noise ratio; And updating soft prompt using such gradient could be very unstable due to its huge continuous search space.
In contrast, discrete prompt optimization effectively narrows the search to a finite vocabulary set, greatly reducing search complexity and improving stability.

\section{Derivation for the alternative interpretation of DDPM's modeling.}
\label{app:theory}
\begin{proposition}
    The original parameterization of DDPM at step $t-K$: $\bm\mu_\theta(\bm{x}_{t-K}, t-K) = \frac{1}{\sqrt{\alpha_{t-K}}}(\bm{x}_{t-K} - \frac{\beta_{t-K}}{\sqrt{1 - \Bar\alpha_{t-K}}}\bm\epsilon_\theta(\bm{x}_{t-K}, t-K))$ can be viewed as first computing an estimate of $x_0$ from the current-step error $\bm{\hat\epsilon}_\theta(\bm{x}_{t-K}, t-K)$:
    \vspace{-0.5mm}
    \begin{align*}
        \bm{\hat x}_0 = \frac{1}{\sqrt{\Bar\alpha_{t-K}}} (\bm{x}_{t-K} - \sqrt{1 - \Bar\alpha_{t-K}}\bm{\hat\epsilon}_\theta(\bm{x}_{t-K}, t-K))
    \end{align*}
    \vspace{-0.5mm}
    And use the estimate to compute the transition probability $q(\bm{x}_{t-K}|\bm{x}_{t-K}, \bm{x}_0)$.

\label{theory:close-form}
\end{proposition}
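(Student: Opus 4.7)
The plan is to verify the identity by direct algebraic substitution, starting from the well-known closed-form expression for the DDPM forward posterior $q(\bm{x}_{t-K-1}\mid \bm{x}_{t-K}, \bm{x}_0)$ and substituting the estimate $\hat{\bm{x}}_0$ derived from the inverted forward-process relation. For notational cleanness, I would first set $s := t-K$ and work at a generic step $s$, which avoids carrying the index $K$ through the algebra and makes the result transparently applicable at any step.

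First, I would recall from standard DDPM derivations that $q(\bm{x}_{s-1}\mid \bm{x}_s, \bm{x}_0)$ is a Gaussian with mean $\tilde{\bm{\mu}}_s(\bm{x}_s, \bm{x}_0) = \tfrac{\sqrt{\bar{\alpha}_{s-1}}\beta_s}{1-\bar{\alpha}_s}\bm{x}_0 + \tfrac{\sqrt{\alpha_s}(1-\bar{\alpha}_{s-1})}{1-\bar{\alpha}_s}\bm{x}_s$. Next, I would invert the one-shot forward relation in \eqref{eq:forward}, namely $\bm{x}_s = \sqrt{\bar{\alpha}_s}\bm{x}_0 + \sqrt{1-\bar{\alpha}_s}\,\epsilon$, to solve for $\bm{x}_0$ in terms of $\bm{x}_s$ and the noise residual, and then replace the true noise $\epsilon$ by the model's prediction $\hat{\bm{\epsilon}}_\theta(\bm{x}_s, s)$; this yields exactly the claimed $\hat{\bm{x}}_0$ formula.

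The core step is substituting this $\hat{\bm{x}}_0$ into $\tilde{\bm{\mu}}_s$ and collecting the coefficients of $\bm{x}_s$ and $\hat{\bm{\epsilon}}_\theta$. Using $\sqrt{\bar{\alpha}_{s-1}}/\sqrt{\bar{\alpha}_s} = 1/\sqrt{\alpha_s}$, the coefficient of $\bm{x}_s$ becomes
\begin{align*}
\frac{\beta_s}{\sqrt{\alpha_s}(1-\bar{\alpha}_s)} + \frac{\sqrt{\alpha_s}(1-\bar{\alpha}_{s-1})}{1-\bar{\alpha}_s} = \frac{\beta_s + \alpha_s(1-\bar{\alpha}_{s-1})}{\sqrt{\alpha_s}(1-\bar{\alpha}_s)},
\end{align*}
and the key algebraic identity $\beta_s + \alpha_s(1-\bar{\alpha}_{s-1}) = (1-\alpha_s) + \alpha_s - \bar{\alpha}_s = 1-\bar{\alpha}_s$ collapses this to $1/\sqrt{\alpha_s}$. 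The coefficient of $\hat{\bm{\epsilon}}_\theta$ simplifies in parallel to $-\beta_s/(\sqrt{\alpha_s}\sqrt{1-\bar{\alpha}_s})$, giving precisely the original DDPM parameterization $\bm{\mu}_\theta(\bm{x}_s, s) = \tfrac{1}{\sqrt{\alpha_s}}\bigl(\bm{x}_s - \tfrac{\beta_s}{\sqrt{1-\bar{\alpha}_s}}\hat{\bm{\epsilon}}_\theta(\bm{x}_s, s)\bigr)$.

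I do not anticipate any genuine obstacle: the proposition is an algebraic identity rather than a convergence or probabilistic claim, and every ingredient (the posterior mean formula and the inversion of the forward process) is already implicitly used in \Cref{sec:prelim}. The only mild care required is book-keeping — tracking the shifted index $t-K$ and the telescoping identity $\beta_s + \alpha_s(1-\bar{\alpha}_{s-1}) = 1-\bar{\alpha}_s$ — which is why I would temporarily substitute $s=t-K$ before carrying out the simplification and then substitute back at the end.
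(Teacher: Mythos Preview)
Your proposal is correct and follows essentially the same route as the paper: simplify the index (the paper writes $t$ in place of $t-K$, you write $s$), recall the posterior mean $\tilde{\bm\mu}$, invert the forward relation to obtain $\hat{\bm x}_0$, and substitute. You actually supply more of the algebra than the paper does, spelling out the telescoping identity $\beta_s + \alpha_s(1-\bar\alpha_{s-1}) = 1-\bar\alpha_s$ that the paper leaves implicit.
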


\begin{proof}
To avoid clustered notations, we use $t$ instead of $t - K$ for the proof below.
Starting from reorganizing \eqref{eq:forward} to the one step estimation:
\vspace{-0.5mm}
\begin{align}
    \bm{\hat x}_0 = \frac{1}{\sqrt{\Bar\alpha_t}} (\bm{x}_t - \sqrt{1 - \Bar\alpha_t}\bm{\hat\epsilon}_\theta(\bm{x}_t, t))
\end{align}
\vspace{-0.5mm}
where $\bm{\hat\epsilon}_\theta$ is the predicted error at step $t$ by the network.
Intuitively this equation means to use the current predicted error to one-step estimate $x_0$.
Using the Bayesian Theorem, one can show that
\vspace{-0.5mm}
\begin{align}
    &q(\bm{x}_{t-K}|\bm{x}_t, \hat{\bm{x}}_0) = \mathcal{N}(\bm{x}_{t-1};\Tilde{\bm\mu}(\bm{x}_t, \bm{x}_0), \Tilde\beta_t\bm{I})\\
     &\Tilde{\bm\mu}(\bm{x}_t, \bm{x}_0) = \frac{\sqrt{\Bar\alpha_{t-1}}\beta_t}{1 - \Bar\alpha_t}\bm{x}_0 + \frac{\sqrt{\alpha_t}(1 - \Bar\alpha_{t-1})}{1 - \Bar\alpha_t}\bm{x}_t
\end{align}
\vspace{-0.5mm}
If we plug $\hat{\bm{x}}_0$ into the above equation, it becomes:
\vspace{-0.5mm}
\begin{align}
    \bm\mu_\theta(\bm{x}_t, t) = \frac{1}{\sqrt{\alpha_t}}(\bm{x}_t - \frac{\beta_t}{\sqrt{1 - \Bar\alpha_t}}\bm\epsilon_\theta(\bm{x}_t, t))
\end{align}
\vspace{-0.5mm}
which is identical to the original modeling of DDPM~\cite{ddpm}.
\end{proof}

\section{The complete DPO-Diff algorithm}
\label{app:algorithm}

\begin{algorithm}
\caption{\textbf{DPO-Diff solver}: Discrete Prompt Optimization Algorithm}
\label{algo:dpo}
\begin{algorithmic}
\REQUIRE User Input $s_{user}$, diffusion model $G(\cdot)$, a loss function $\mathcal{L}(I, s)$, learning rate $lr$.
\ENSURE An optimized prompt $s^*$.
\STATE {// \textit{Building Search Space}}
\STATE Query ChatGPT to generate a word-substitutes dictionary for $s_{user}$
\STATE Initialize Gumbel parameter $\alpha$ accordingly.
\STATE {// \textit{Gradient Prompt Optimization}}
\FOR{$i$ from $1$ to max\_iter}
    \STATE Sample $p(w; \alpha)$ for each word $w$ from Gumbel Softmax.
    \STATE Compute mixed embedding: $\Tilde{e}(\alpha) = \sum_{i=1}^{|\mathcal{V}|} p(w=i;\alpha) * e_i$
    \STATE Compute text gradient: $g_{s} = \nabla_{\alpha} \mathcal{L}(G(\Tilde{e}(\alpha)), s)$
    \STATE Update Gumbel Parameter: $\alpha_i = \alpha_i - lr * g_{s_{user}}$
\ENDFOR
\STATE {// \textit{Evolutionary Sampling}}
\STATE Generate initial population $\mathcal{P} \sim Gumbel(\alpha)$
\STATE Find the population that minimizes $\mathcal{L}$ using genetic algorithm $\mathcal{P}^* = EvoSearch(\mathcal{P}, \mathcal{L})$
\STATE $s^* = \mathrm{argmax}_s(\mathcal{G}(s\in \mathcal{P}^*), s_{user})$
\end{algorithmic}
\end{algorithm}

\section{Taxonomy of prompt optimization v.s. textual inversion}
\begin{table}[h!]
    \centering
        \resizebox{0.98\linewidth}{!}{
        \begin{tabular}{lp{3cm}p{7cm}p{2.5cm}p{3cm}p{3cm}}
        \toprule
        \textbf{Task Name} & \textbf{Example Method} & \textbf{Taxonomy} & \textbf{Input} & \textbf{Output} & \textbf{Backpropagation} \\
        \midrule
        \midrule
        
        Textual Inversion & TI~\citep{ti}, NTI~\citep{nti}, PEZ~\citep{pez} & Generate novel visual concepts provided in user images, done by distilling image to a soft text embedding and use that for downstream tasks & use r image & a text prompt that encodes the given image content & identical to regular diffusion model training \\
        \hline
        Prompt Optimization & Promptist~\citep{promptist}, DPO-Diff (ours) & Improve the user prompt into a better one so that the generated images better follow the original user intention & user text prompt & An improved version of user text prompt & through inference steps \\

        \bottomrule
        \end{tabular}
    }
    \caption{Comparison of prompt optimization and textual inversion tasks.}
    \label{tab:taxonomy}
\end{table}

\section{Implementation details}
\label{app:impl}

\subsection{Hyperparameters}
\label{app:impl.hyperparam}
This section details the hyperparameter choices for our experiments.
We use the same set of hyperparameters for all datasets and tasks (prompt improvement and adversarial attack), unless otherwise specified.

\paragraph{Model}
We use Stable Diffusion v1-4 with a DDIM sampler for all experiments in the main paper.
The guidance scale and inference steps are set to 7.5 and 50 respectively (default).
We also experimented with other versions, such as Stable Diffusion v2-1 (512 x 512 resolution) and v2 (786x786 resolution), and found that the results are similar across different versions.
Although, we note that the high-resolution version of v2 tends to produce moderately better original images than v1-4 and v2-1 in terms of clip loss, possibly due to sharper images.

\paragraph{Shortcut Text Gradient}
We set $K = 1$, corresponding to a 1-step Shortcut Text Gradient.
This minimizes the memory and runtime cost while empirically producing enough signal to guide the prompt optimization.
Throughout the entire optimization episode, we progressively increase $t$ from 15 to 25 via a fixed stepwise function.
This corresponds to a coarse-to-fine learning curriculum.
We note that the performance is only marginally affected by the choice of the upper and lower bound for $t$ (e.g. 20-30, 10-40 all produce similar results), as long as it avoids values near $0$ (diminishing gradient) and $T$ (excessively noisy).

\paragraph{Gumbel softmax}
We use Gumbel Softmax with temperature 1.
The learnable parameters are initialized to 1 for the original word (for positive prompts) and empty string (for negative prompts), and 0 otherwise.
To encourage exploration. We bound the learnable parameters within 0 and 3 via hard clipping.
The performance remains largely incentive to the choice of bound, as long as they are in a reasonable range (i.e. not excessively small or large).

\paragraph{Optimization}
We optimize DPO-Diff using RMSprop with a learning rate of 0.1 and momentum of 0.5 for 20 iterations.
Each iteration will produce a single Gumbel Sample (batch size = 1) to compute the gradient, which will be clipped to $1/40$.

\paragraph{clip loss}
The specific clip loss used in our experiment is spherical clip loss, following an early online implementation of clip-guided diffusion~\citep{spherical}:
\begin{align*}
    \text{spherical\_clip(x, y)} = 2 \cdot \left(\arcsin\frac{\left\|x - y\right\|_2}{2}\right)^2
\end{align*}
Note that our method does not rely on this specific choice to function;
We also experimented with other distance measures such as cos similarity on the clip embedding space, and found that they produced nearly identical prompts (and thus images).

\paragraph{Evolution Search}
We follow a traditional evolution search composed of four steps: initialize population, tournament, mutation, and crossover.
The specific choice of hyperparameters is population size = 20, tournament = top 10, mutation with prob = 0.1 and size = 10, and crossover with size = 10.
We run the evolutionary search for two iterations for both tasks, while we note that the prompt improvement task often covers much faster (within a single iteration).

\subsection{Search space construction}
\label{app:impl.space}

We construct our Synonyms and Antonyms space by querying ChatGPT using the following prompts.
Since ChatGPT sometimes makes mistakes by producing false synonyms or antonyms, we further filter candidate prompts by thresholding the cosine similarity between adversarial prompts and user prompts in the embedding space of T5 during the evolutionary search phase~\cite{t5}. The threshold is set to 0.9 for all datasets.

\begin{center}
    \begin{quote}
        \texttt{Read the next paragraph. For each word, give 5 substitution words that do not change the meaning. Use the format of "A $\rightarrow$ B".} \\
    \end{quote}
\end{center}

For Antonyms:
\begin{center}
    \begin{quote}
        \texttt{Read the next paragraph. For each word, give 5 opposite words if it has any. Use the format of "A $\rightarrow$ B".}
    \end{quote}
\end{center}

\section{More experimental settings}
\label{app:exp_setting}

\subsection{Dataset Collection}
\label{app:exp_setting.dataset}

The prompts used in our paper are collected from three sources, DiffusionDB, COCO, and ChatGPT.

\paragraph{DiffusionDB}
DiffusionDB is a giant prompt database comprised of 2m highly diverse prompts for text-to-image generation.
Since these prompts are web-crawled, they are highly noisy, often containing incomplete phrases, emojis, random characters, non-imagery prompts, etc (We refer the reader to its \href{https://huggingface.co/datasets/poloclub/diffusiondb}{HuggingFace} repo for an overview of the entire database.).
Therefore, we filter prompts from DiffusionDB by (1). asking ChatGPT to determine whether the prompt is complete and describes an image, and (2) remove emoji-only prompts.
We filter a total of 4,000 prompts from DiffusionDB and use those prompts to generate images via Stable Diffusion.
We sample 100 prompts with clip loss above 0.85 for prompt improvement, and 0.8 for adversarial attacks respectively.
 For ChatGPT, we found that it tends to produce prompts with much lower clip score compared with COCO and DiffusionDB.
 To ensure a sufficient amount of prompts from this source is included in the dataset, we lower the cutoff threshold to 0.82 when filtering its hard prompts for the prompt improvement task.
 
\paragraph{COCO}
We use the captions from the 2014 validation split of MS-COCO dataset as prompts.
Similar to DiffusionDB, we filter 4000 prompts, and further sample 100 prompts with clip loss above 0.85 for prompt improvement, and 0.8 for adversarial attack respectively.

\paragraph{ChatGPT}
We also query ChatGPT for descriptions, as we found that it tends to produce more vivid and poetic descriptions compared with the former sources.
We use a diverse set of instructions for this task.
Below are a few example prompts we used to query ChatGPT for image descriptions.

\begin{center}
    \begin{quote}
        \texttt{Generate N diverse sentences describing photoes/pictures/images} \\
        \texttt{Generate N diverse sentences describing images with length around 10} \\
        \texttt{Generate N diverse sentences describing images with length around 20} \\
        \texttt{Generate N diverse sentences describing images using simple words} \\
        \texttt{Generate N diverse sentences describing images using fancy words}
    \end{quote}
\end{center}

Below are some example prompts returned by ChatGPT:

\begin{center}
    \begin{quote}
        \texttt{A majestic waterfall cascades down a rocky cliff into a clear pool below, surrounded by lush greenery.} \\
        \texttt{The sun setting behind the mountains casting a warm orange glow over the tranquil lake.} \\
        \texttt{A pair of bright red, shiny high heels sit on a glossy wooden floor, with a glittering disco ball above.} \\
        \texttt{A farmer plowing a field with a tractor.} \\
        \texttt{The vivid orange and dark monarch butterfly was flapping through the atmosphere, alighting on a flower to sip nectar.}
    \end{quote}
\end{center}

We empirically observe that ChatGPT produces prompts with low clip loss when used to generate images through Stable Diffusion on average, compared with DiffusionDB and COCO.
Therefore, for filtering challenging prompts, we reduce the threshold from 0.85 to 0.82 to allow more prompts to be selected.

\subsection{Human Evaluation}
\label{app:exp_setting.evaluation}
We ask 5 judges without ML background to evaluate the faithfulness of the generated images.
For each prompt, we generate two images using the same seeds across different methods.
To further avoid subjectiveness in evaluation, we provide the judgers an ordered list of important key concepts for each prompt, and ask them to find the winning prompt by comparing the hit rate.
The ordered list of key concepts is provided by ChatGPT.

Since the 600 prompts used in the main experiments are filtered automatically via clip loss, they exhibit a certain level of false positive rate: some images are actually faithful.
Therefore, we further filter out 100 most broken prompts to be evaluated by human judgers.

\paragraph{Special treatment for Adversarial Attack task.}
When conducting human evaluation on adversarial attack tasks, we make the following adjustments to the protocol:
(1). The wins and losses are reversed
(2) There will be no "draw", as this counts as a failed attempt.
(3). Removing meaning-altering successes: we asked the human evaluators to identify cases where success is achieved only because the adversarial prompt changed the meaning of the user prompt. Such instances are categorized as failures.
The results of our evaluation showcase that DPO-Diff achieved a success rate of 44\%, thereby establishing itself as the only baseline for this particular task on diffusion models.

\section{Extra qualitative results}
\label{app:exp_result}

We include extra quantitative results of DPO-Diff in Figure~\ref{fig:improve_more} and Figure~\ref{fig:attack_more}.
Additionally, we conducted experiments with the latest SD-XL model, as illustrated in ~\Cref{fig:xl.improve}.
The results indicate that DPO-Diff also achieves significant improvements with more advanced diffusion models.

    \begin{figure}
    \centering
    \caption{ More images generated by user input versus improved negative prompts using \textbf{Stable Diffusion v1-4}.}
        \begin{tabularx}{\textwidth}{>{\centering\arraybackslash}X|>{\centering\arraybackslash}X|>{\centering\arraybackslash}X}
            \hline\\[-7pt] {\textbf{User Input}} & {\textbf{Promptist - Modifiers}} & {\textbf{DPO-Diff - Negative Prompt}} \\[2pt]\hline



            \\[-7pt]
            {\prompt{The ash and dark pigeon was roosting on the lamppost, observing the environment.}} &
            {\prompt{intricate, elegant, highly detailed, ..., illustration, by justin gerard and artgerm, 8 k}} &
            {\prompt{fresh, shiny, hawk, overlooking, inside, Portrait, background, faded, unreal}} \\[1pt]
            \includegraphics[width=0.8\linewidth]{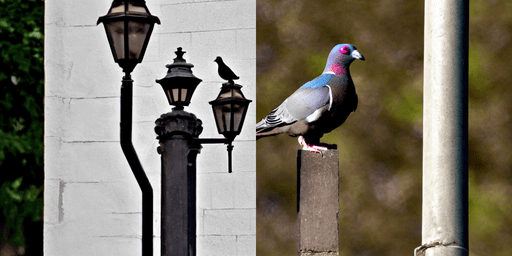} &
            \includegraphics[width=0.8\linewidth]{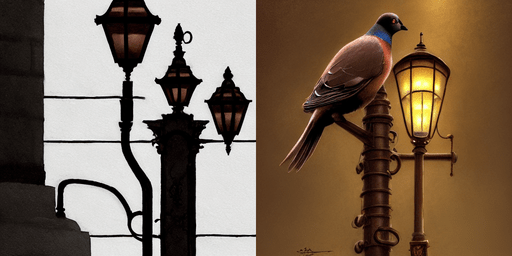} &
            \includegraphics[width=0.8\linewidth]{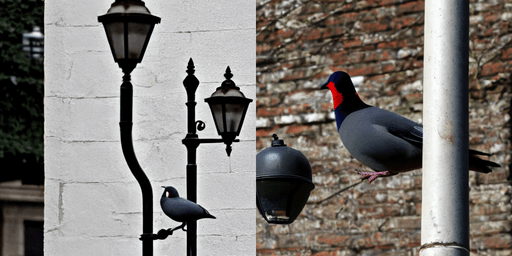}
            \\[2pt]\hline

            \\[-7pt]
            {\prompt{alien caught smoking cigarettes in rented house}} &
            {\prompt{intricate, elegant, highly detailed, ..., art by artgerm and greg rutkowski and, 8 k}} &
            {\prompt{native, liberated, clear, dull, out, bought, road, Macro, Script, monochrome, rendered}} \\[1pt]
            \includegraphics[width=0.8\linewidth]{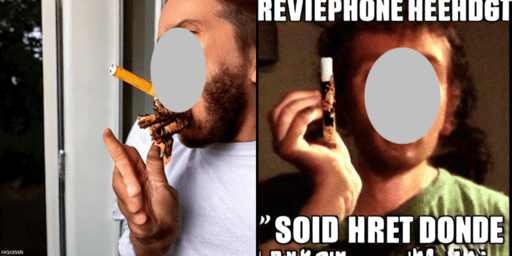} &
            \includegraphics[width=0.8\linewidth]{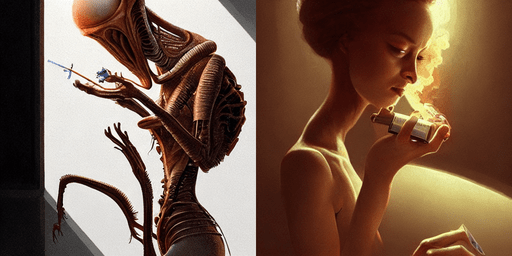} &
            \includegraphics[width=0.8\linewidth]{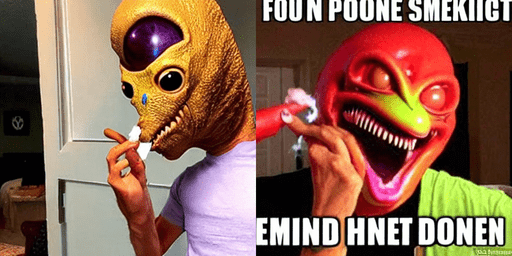}
            \\[2pt]\hline

            \\[-7pt]
            {\prompt{a spooky ghost in a graveyard by justin gerard and tony sart}} &
            {\prompt{greg rutkowski, zabrocki, karlkka, ..., zenith view, zenith view, pincushion lens effect}} &
            {\prompt{physical, house, aside, except, Grains, design, replica}} \\[1pt]
            \includegraphics[width=0.8\linewidth]{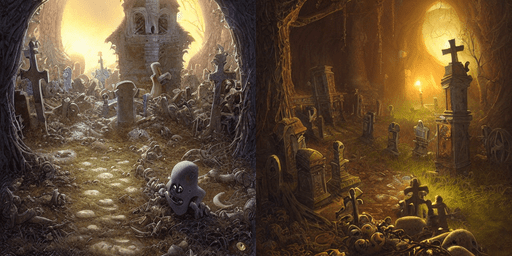} &
            \includegraphics[width=0.8\linewidth]{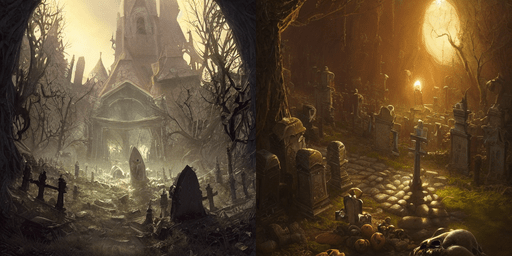} &
            \includegraphics[width=0.8\linewidth]{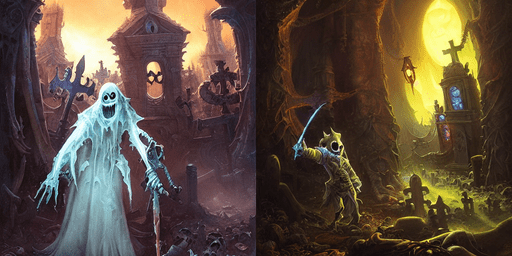}
            \\[2pt]\hline



            \\[-7pt]
            {\prompt{a plane flies through the air with fumes coming out the back }} &
            {\prompt{Rephrase: a plane flies through the air with fumes coming ..., trending on artstation}} &
            {\prompt{car, crashes, land, ..., breeze, departing, into, front, Grains, cold, monochrome, oversized}} \\[1pt]
            \includegraphics[width=0.8\linewidth]{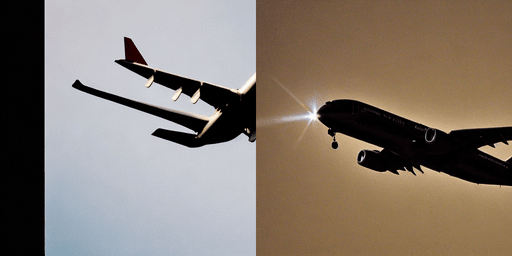} &
            \includegraphics[width=0.8\linewidth]{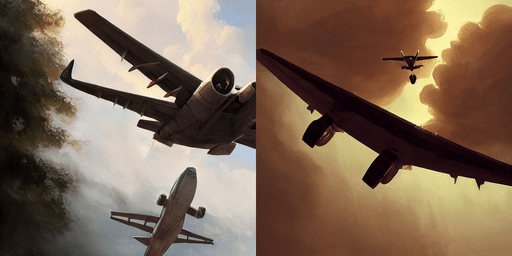} &
            \includegraphics[width=0.8\linewidth]{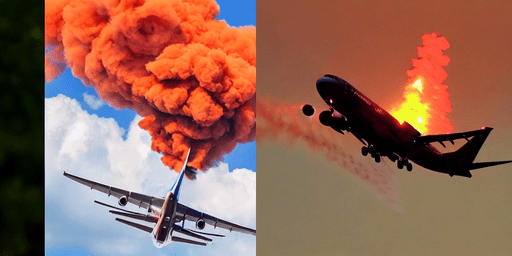}
            \\[2pt]\hline



            \\[-7pt]
            {\prompt{A man is seated on a floor with a computer and some papers.}} &
            {\prompt{intricate, elegant, highly detailed, ..., illustration, by justin gerard and artger rutkowski, 8 k}} &
            {\prompt{female, was, standing, below, top, without, zero, ..., emails, Blurry, bad, extra, proportion}} \\[1pt]
            \includegraphics[width=0.8\linewidth]{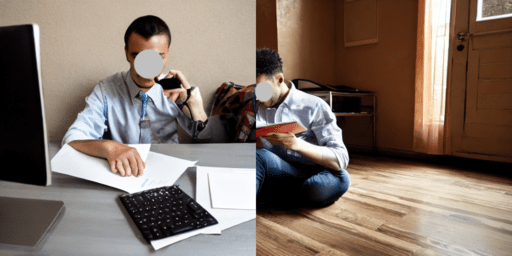} &
            \includegraphics[width=0.8\linewidth]{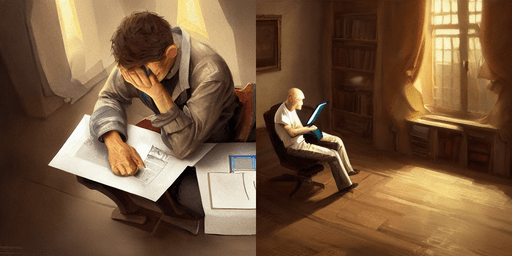} &
            \includegraphics[width=0.8\linewidth]{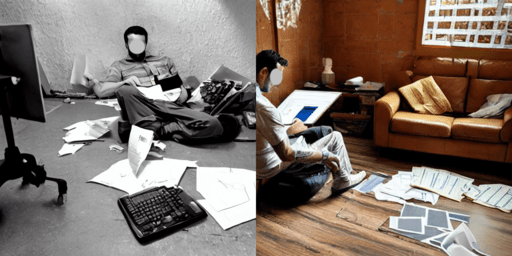}
            \\[2pt]\hline


            \\[-7pt]
            {\prompt{Orange and brown cat sitting on top of white shoes.}} &
            {\prompt{Trending on Artstation, ..., 4k, 8k, unreal 5, very detailed, hyper control-realism.}} &
            {\prompt{purple, however, black, crawling, ..., socks, Cropped, background, inverted, shape}} \\[1pt]
            \includegraphics[width=0.8\linewidth]{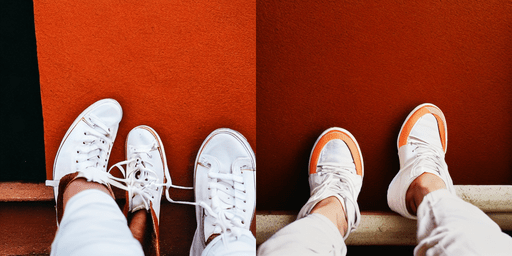} &
            \includegraphics[width=0.8\linewidth]{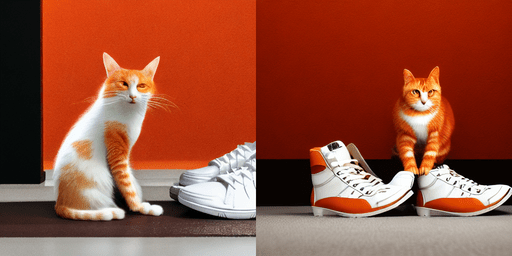} &
            \includegraphics[width=0.8\linewidth]{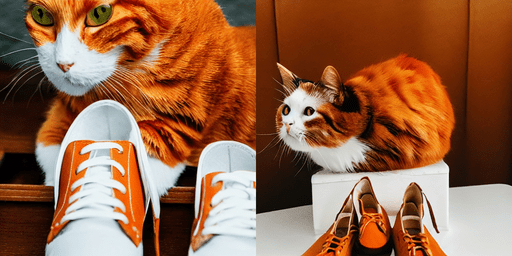}
            \\[2pt]\hline

        \end{tabularx}
    \label{fig:improve_more}
    \end{figure}

    \begin{figure}
    \centering
    \caption{More images generated by user input and adversarial prompts using \textbf{Stable Diffusion v1-4}.}

            \begin{tabularx}{\textwidth}{>{\centering\arraybackslash}X|>{\centering\arraybackslash}X}
                \hline\\[-10pt] {\textbf{User Input}} & {\textbf{DPO-Diff - Adversarial Prompts}} \\[0pt]\hline

                \\[-7pt]
                {\prompt{A cinematic scene from Berlin.}} &
                {\prompt{A cinematic shot from Metropolis.}} \\[2pt]
                \includegraphics[width=0.8\linewidth]{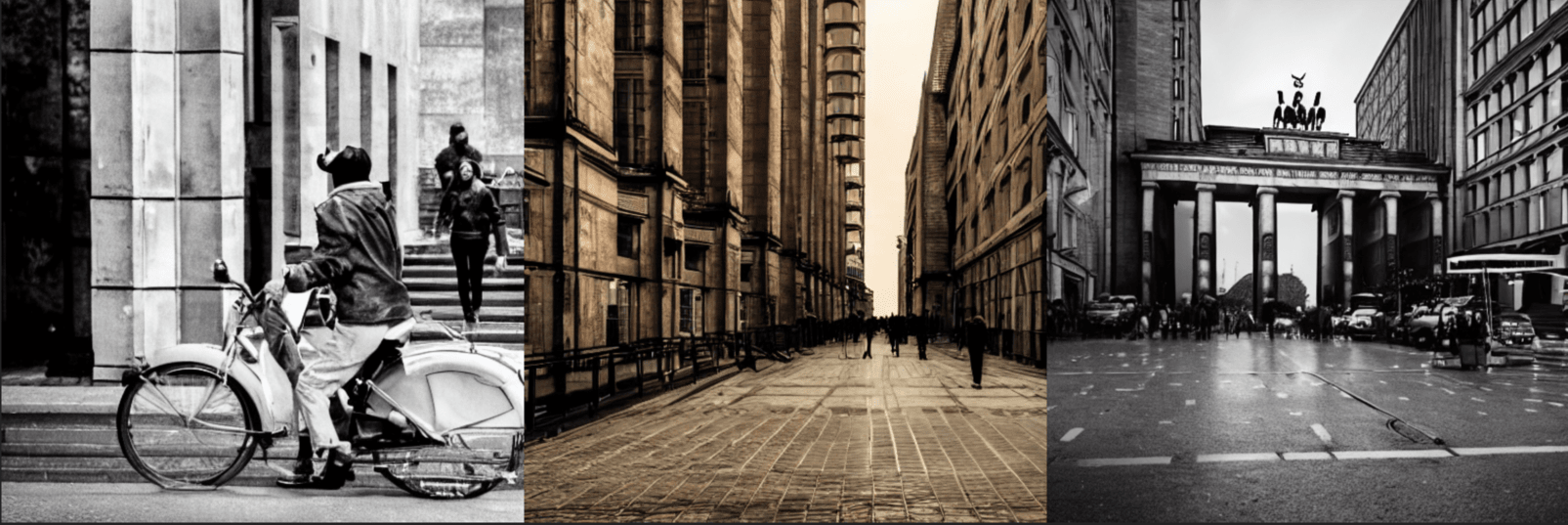} &
                \includegraphics[width=0.8\linewidth]{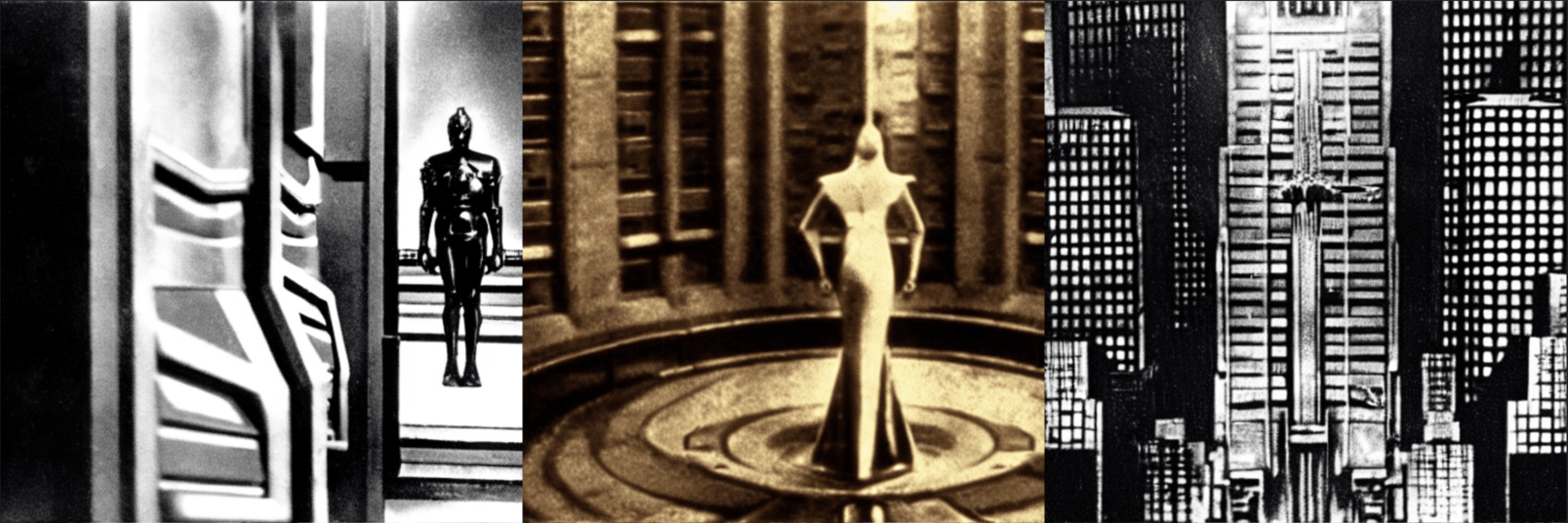}
                \\[2pt]\hline
    
    


                \\[-12pt]
                {\prompt{A painter adding the finishing touches to a vibrant canvas.}} &
                {\prompt{A craftsman incorporating the finishing touches to a vivid masterpiece .}} \\[1pt]
                \includegraphics[width=0.8\linewidth]{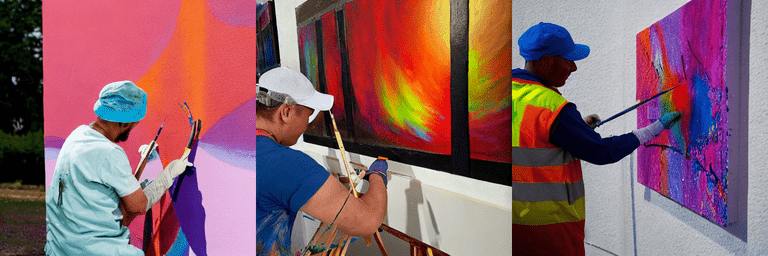} &
                \includegraphics[width=0.8\linewidth]{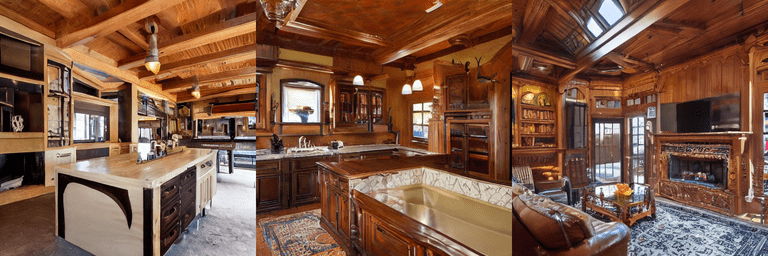}
                \\[-2pt]\hline

                \\[-12pt]
                {\prompt{A skillful tailor sewing a beautiful dress with intricate details.}} &
                {\prompt{A skillful tailor tailoring a lovely attire with sophisticated elements .}} \\[1pt]
                \includegraphics[width=0.8\linewidth]{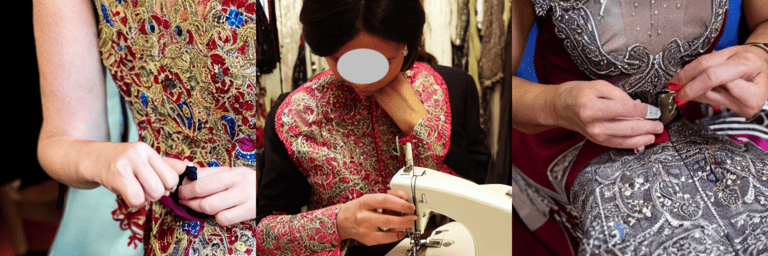} &
                \includegraphics[width=0.8\linewidth]{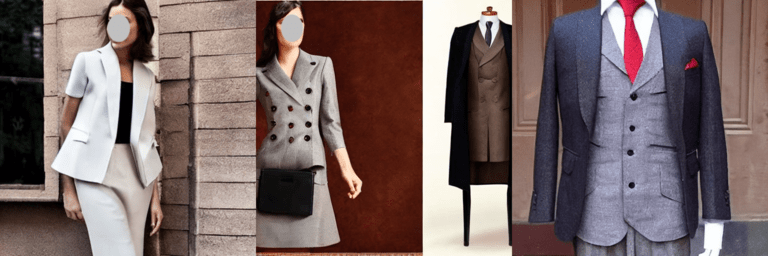}
                \\[-2pt]\hline

                \\[-12pt]
                {\prompt{portrait of evil witch woman in front of sinister deep dark forest ambience}} &
                {\prompt{image of vile mage dame in front of threatening profound dim wilderness ambience}} \\[1pt]
                \includegraphics[width=0.8\linewidth]{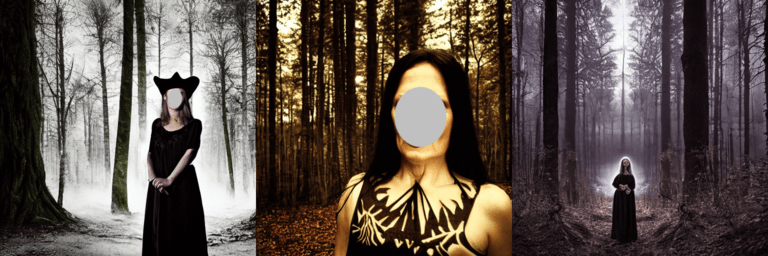} &
                \includegraphics[width=0.8\linewidth]{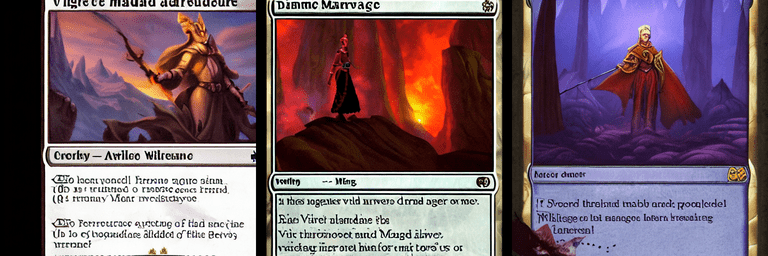}
                \\[-2pt]\hline



                \\[-12pt]
                {\prompt{Amazing photorealistic digital concept art of a guardian robot in a rural setting by a barn.}} &
                {\prompt{astounding photorealistic digital theory design of a defender robot in a provincial context by a stable .}} \\[1pt]
                \includegraphics[width=0.8\linewidth]{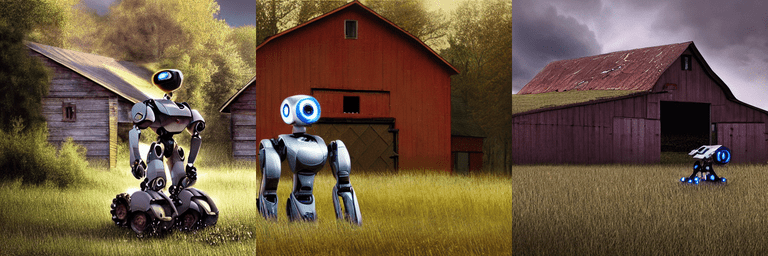} &
                \includegraphics[width=0.8\linewidth]{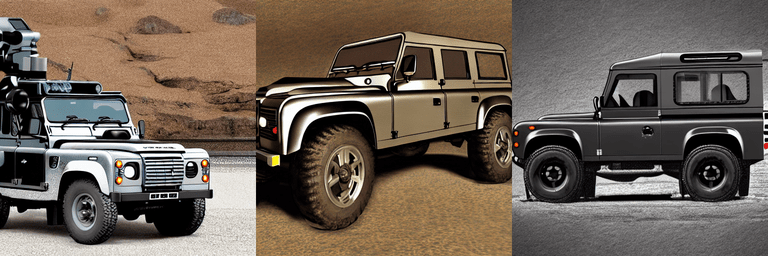}
                \\[-2pt]\hline

                \\[-12pt]
                {\prompt{close up portrait of a young lizard as a wizard with an epic idea}} &
                {\prompt{close up snapshot of a youthful chameleon as a magician with an heroic guess}} \\[1pt]
                \includegraphics[width=0.8\linewidth]{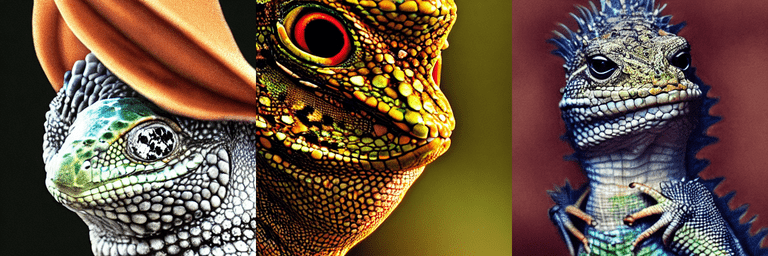} &
                \includegraphics[width=0.8\linewidth]{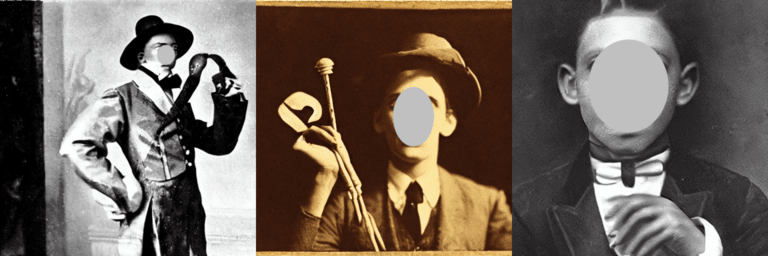}
                \\[-2pt]\hline
        \end{tabularx}
    \label{fig:attack_more}
    \end{figure}

\newpage

\section{Further discussion on Gradient-based Prompt Optimization}

The computational cost of the Shortcut Text Gradient is controlled by $K$. Moreover, when we set $t = T$ and $K = T - 1$, it becomes the full-text gradient.

The result of remark 2 is rather straightforward:
recall that the image generation process starts with a random noise $x_T$ and gradually denoising it to the final image $x_0$.
Since the gradient is enabled from $t$ to $t - K$ in Shortcut Text Gradient; when $t = T$ and $K = T$, it indicates that gradient is enabled from $T$ to $0$, which covers the entire inference process.
In this case, the Shortcut Text Gradient reduces to the full gradient on text.


\begin{figure*}[t!]
    \centering

    \includegraphics[width=0.45\linewidth]{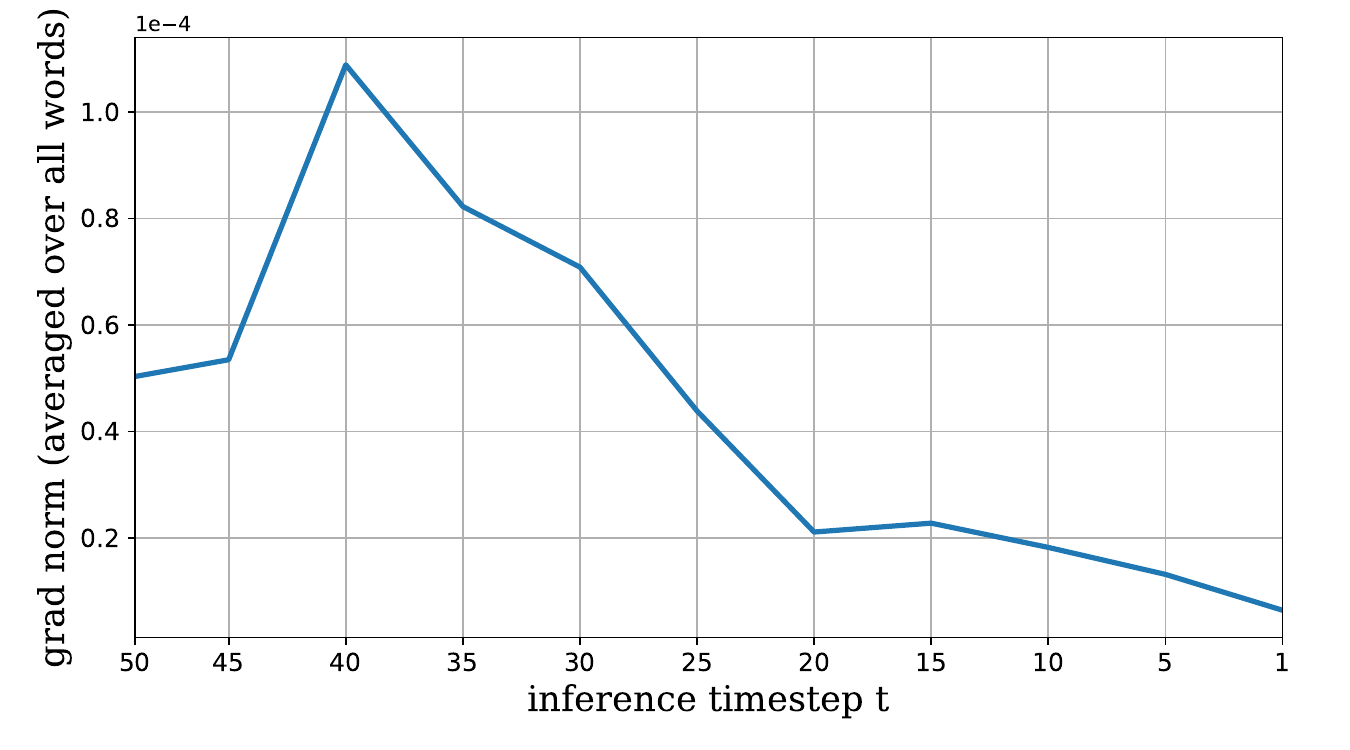}
    \label{fig:ablate_t.attack}
    \includegraphics[width=0.45\linewidth]{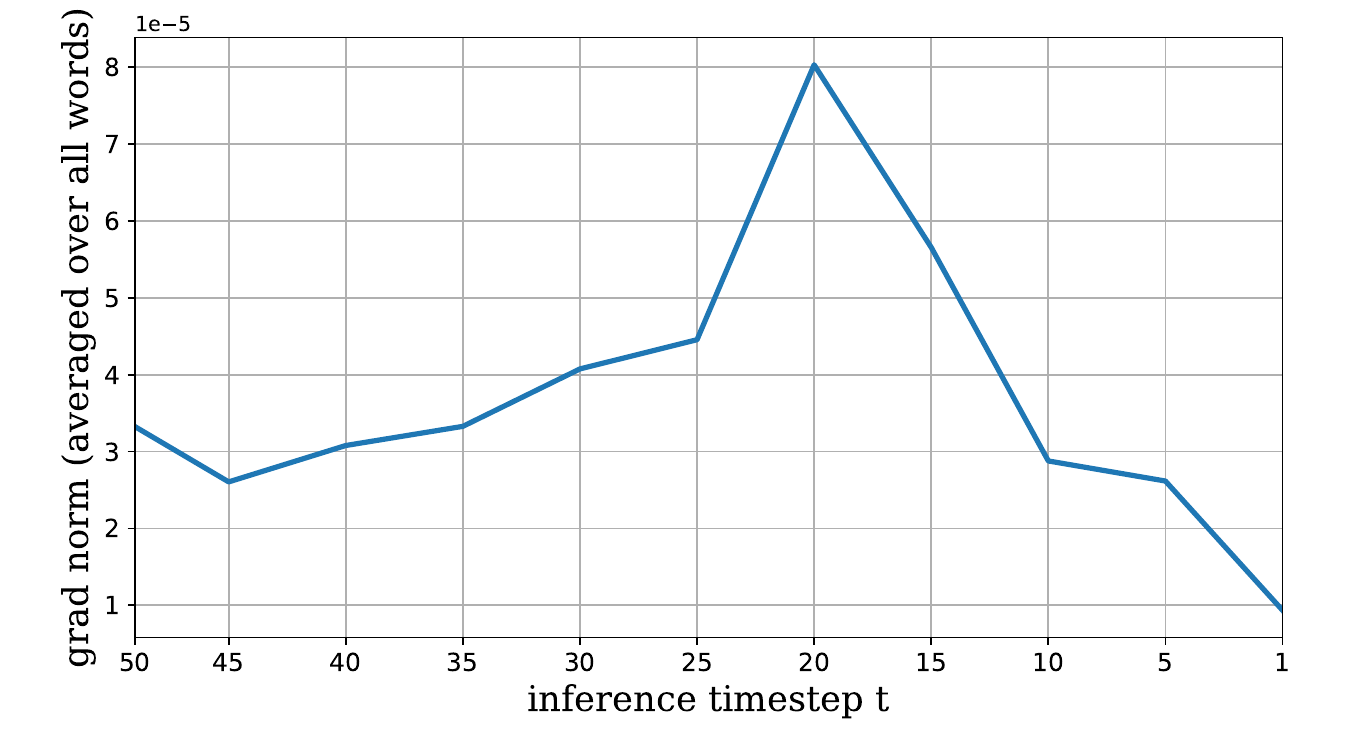}
    \label{fig:ablate_t.improve}

    \caption{\textbf{Gradient near the beginning and end of the inference process are significantly less informative}. We plot the average gradient norm over all words across different timesteps. For each timestep, the Shortcut Text Gradient is computed over 100 Gumbel samples.}
    \label{fig:ablate_t}
    
\end{figure*}

\section{Extra ablation study results.}
\subsection{Gradient norm v.s. timestep.}
When randomly sampling t in computing the Shortcut Text Gradient, we avoid timesteps near the beginning and the end of the image generation process, as gradients at those places are not informative.
As we can see, for both adversarial attack and prompt improvement, the gradient norm is substantially smaller near $t=T$ and especially $t=0$, compared with timesteps in the middle.
The reason, we conjecture, is that the images are almost pure noise at the beginning, and are almost finalized towards the end.
Figure \ref{fig:ablate_t} shows the empirical gradient norm across different timesteps.

\subsection{Extended discussion on different search algorithms}
In our experiments, we found that Gradient-based Prompt Optimization converges faster at the early stage of the optimization.
This result confirms the common belief that white-box algorithms are more query efficient than black-box algorithms in several other machine learning fields, such as adversarial attack~\cite{blackbox, opt}.
However, when giving a sufficient amount of query, Evolutionary Search eventually catches up and even outperforms GPO.
The reason, we conjecture, is that GPO uses random search to draw candidates from the learned distribution, which bottlenecked its sample efficiency at later stages.
This promotes the hybrid algorithm used in our experiments: Using Evolutionary Search to sample from the learned distribution of GPO.
The hybrid algorithm achieves the best overall convergence.

\subsection{Extended discussion on negative v.s. positive prompt optimization}
As discussed in the main text, one of our highlighted findings of is that optimizing for negative prompts is more effective than positive prompts in improving the prompt-following ability of diffusion models.
This is evidenced by Table \ref{tab:ablate.nvp}, which shows that Antonym Space contains a denser population of promising prompts (lower clip loss) than positive spaces.
Such search space also allows the search algorithm to identify an improved prompt more easily.
We conjecture that this might indicate diffusion models are more sensitive to changes in negative prompts than positive prompts, as the baseline negative prompt is merely an empty string.

\begin{figure}
\centering
\caption{Images generated by user input and improved negative prompts on \textbf{Stable Diffusion XL}.}
    \begin{tabularx}{\textwidth}{>{\centering\arraybackslash}X|>{\centering\arraybackslash}X|>{\centering\arraybackslash}X}
        \hline\\[-7pt] {\textbf{User Input}} & {\textbf{Promptist - Modifiers}} & {\textbf{DPO-Diff - Negative Prompt}} \\[2pt]\hline

        \\[-7pt]
        {\prompt{a brown dachshund with a black cat sitting in a canoe.}} &
        {\prompt{highly detailed, digital painting, ..., sharp focus, illustration, art by artgerm and greg rutkowski and epao}} &
        {\prompt{zero, black, cat, lacking, green, horse, walking, beyond, house, Mutation, animals, error, surreal}} \\[1pt]
        \includegraphics[scale=0.5,width=0.8\linewidth]{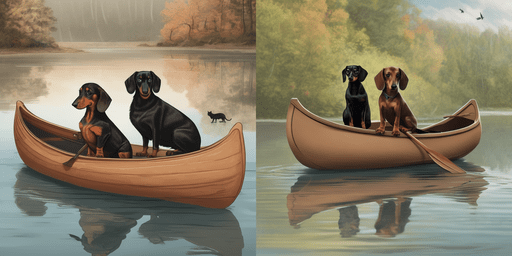} &
        \includegraphics[scale=0.5,width=0.8\linewidth]{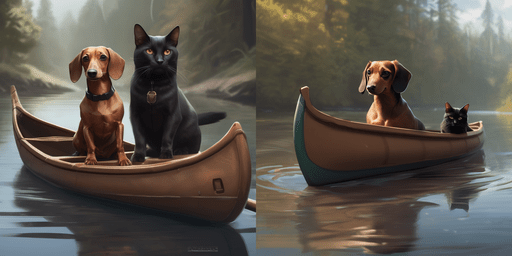} &
        \includegraphics[scale=0.5,width=0.8\linewidth]{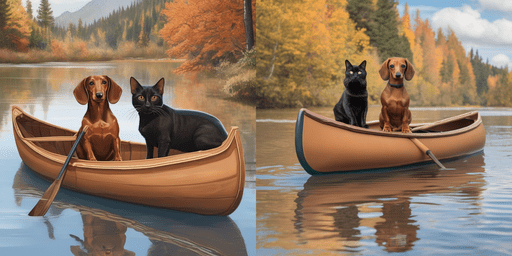}
        \\[2pt]\hline

        \\[-7pt]
        {\prompt{darth vader in iron man armour}} &
        {\prompt{highly detailed, digital painting, ..., illustration, art by greg rutkowski and alphonse mucha}} &
        {\prompt{yoda, outside, lightweight, exposed, Render, Script, incomplete, pieces}} \\[1pt]
        \includegraphics[scale=0.5,width=0.8\linewidth]{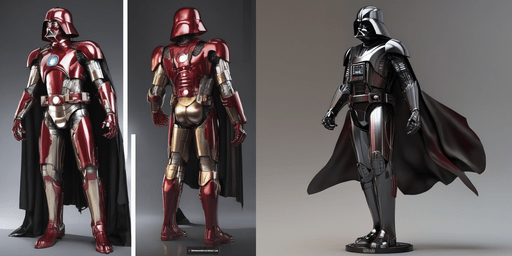} &
        \includegraphics[scale=0.5,width=0.8\linewidth]{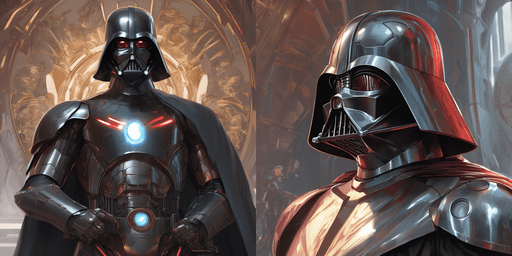} &
        \includegraphics[scale=0.5,width=0.8\linewidth]{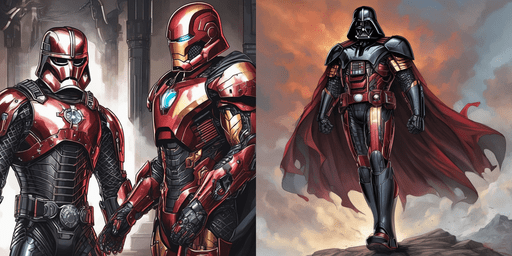}
        \\[2pt]\hline

        \\[-7pt]
        {\prompt{The ash and dark pigeon was roosting on the lamppost, observing the environment.}} &
        {\prompt{intricate, elegant, highly detailed, digital painting, artstation, concept art, sharp focus, illustration, by justin gerard and art rutkowski, 8 k}} &
        {\prompt{green, clear, departing, ditch, inner, Mistake, CGI, cooked, replica}} \\[1pt]
        \includegraphics[scale=0.1,width=0.8\linewidth]{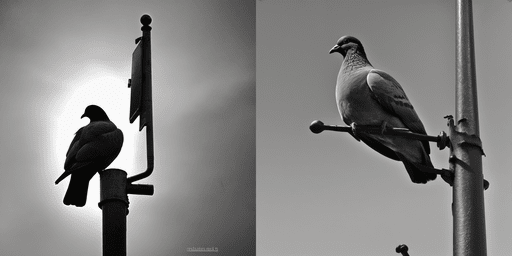} &
        \includegraphics[scale=0.1,width=0.8\linewidth]{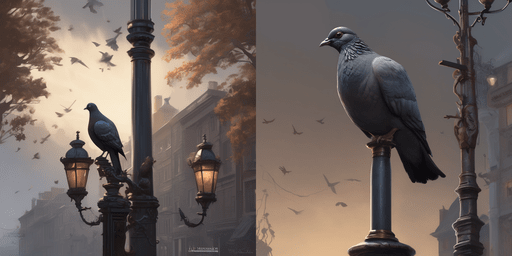} &
        \includegraphics[scale=0.1,width=0.8\linewidth]{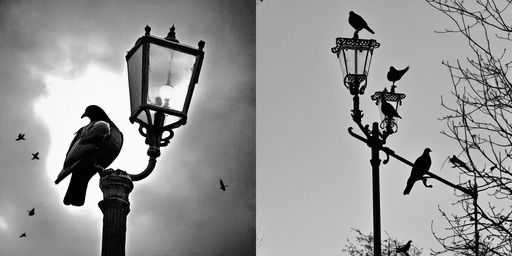}
        \\[2pt]\hline

        \\[-7pt]
        {\prompt{a very big building with a mounted clock}} &
        {\prompt{greg rutkowski, zabrocki, ..., 8 k, ultra wide angle, zenith view, pincushion lens effect}} &
        {\prompt{mildly, tiny, detached, Logo, cityscape, inverted, stale}} \\[1pt]
        \includegraphics[scale=0.1,width=0.8\linewidth]{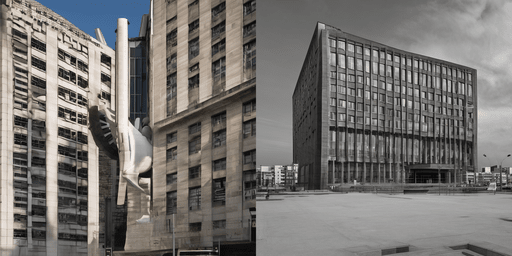} &
        \includegraphics[scale=0.1,width=0.8\linewidth]{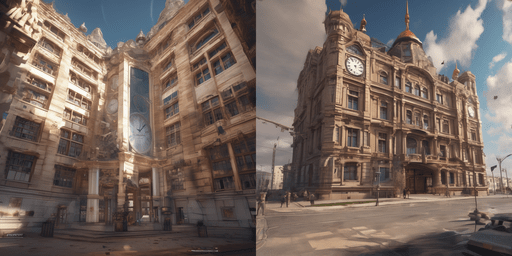} &
        \includegraphics[scale=0.1,width=0.8\linewidth]{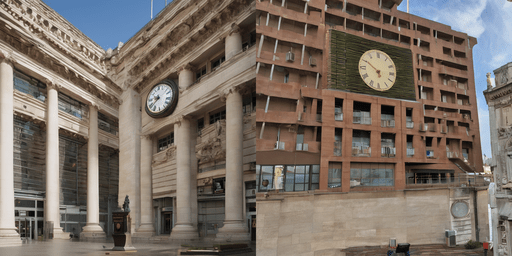}
        \\[2pt]\hline

        \\[-7pt]
        {\prompt{The man is sitting on the bench close to the asian section.}} &
        {\prompt{greg rutkowski, zabrocki, karlkka, ..., 8 k, ultra wide angle, zenith view, pincushion lens effect}} &
        {\prompt{girl, standing, under, ground, distant, unto, entirety, Mistake, black, engine, poorly}} \\[1pt]
        \includegraphics[scale=0.1,width=0.8\linewidth]{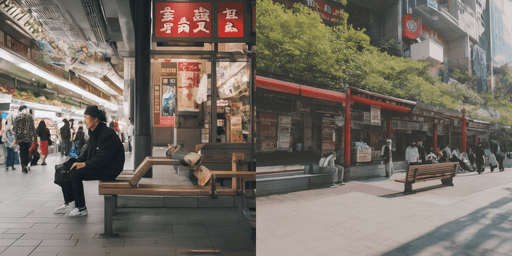} &
        \includegraphics[scale=0.1,width=0.8\linewidth]{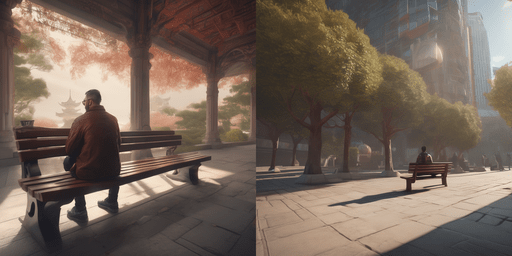} &
        \includegraphics[scale=0.1,width=0.8\linewidth]{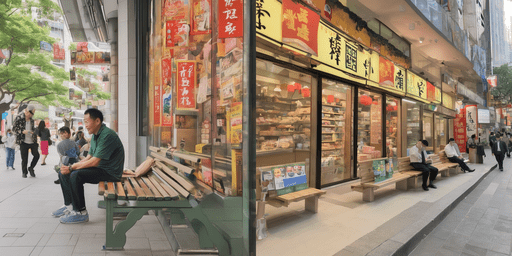}
        \\[2pt]\hline

        \\[-7pt]
        {\prompt{Two sinks stand next to a bathtub in a bathroom.}} &
        {\prompt{greg rutkowski, zabrocki, karlkka, jayison devadas, trending impervious}} &
        {\prompt{one,soars, lie, multiple, kitchen, outside, bedroom, Blurry, artificial, down, poorly}} \\[1pt]
        \includegraphics[scale=0.1,width=0.8\linewidth]{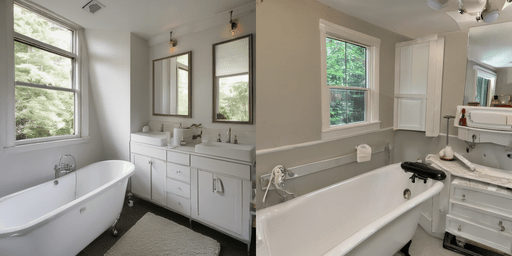} &
        \includegraphics[scale=0.1,width=0.8\linewidth]{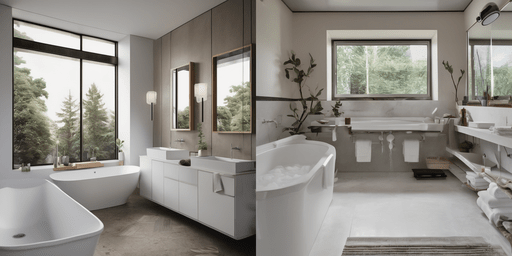} &
        \includegraphics[scale=0.1,width=0.8\linewidth]{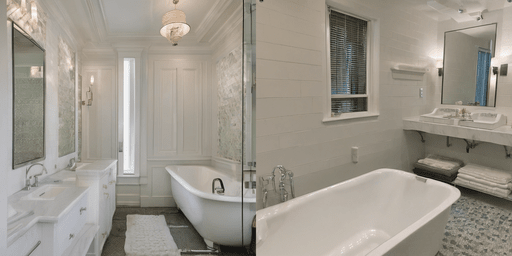}
        \\[2pt]\hline


        \\[-7pt]
        {\prompt{A woman that is standing next to a man.}} &
        {\prompt{highly detailed, digital painting, artstation, ..., art by greg rutkowski and alphonse mucha}} &
        {\prompt{male, crawling, away, far, several, woman, Mutation, characters, folded, username}} \\[1pt]
        \includegraphics[scale=0.1,width=0.8\linewidth]{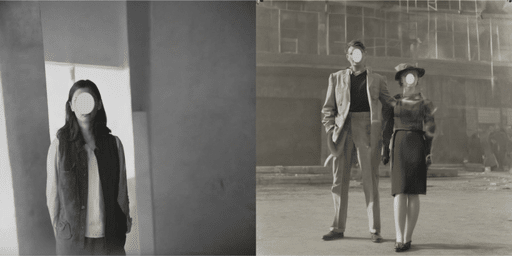} &
        \includegraphics[scale=0.1,width=0.8\linewidth]{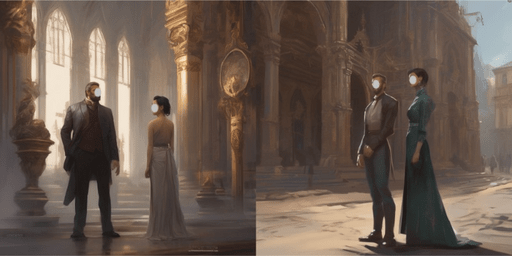} &
        \includegraphics[scale=0.1,width=0.8\linewidth]{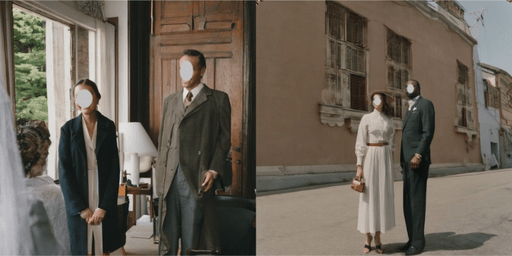}
        \\[2pt]\hline

    \end{tabularx}
\label{fig:xl.improve}
\end{figure}

\end{document}